\definecolor{LightCyan}{rgb}{0.8, 0.9, 1}
\def\cN{\mathcal{N}}
\newcommand{\method}{\texttt{SPIN}}
\definecolor{ao}{rgb}{0.0, 0.5, 0.0}
\title{\huge Self-Play Fine-Tuning of Diffusion Models for Text-to-Image Generation}
\author{
    Huizhuo Yuan\thanks{Equal contribution} \thanks{Department of Computer Science, University of California, Los Angeles, CA 90095, USA; e-mail: {\tt hzyuan@cs.ucla.edu}}
    \and
    Zixiang Chen\footnotemark[1] \thanks{Department of Computer Science, University of California, Los Angeles, CA 90095, USA; e-mail: {\tt chenzx19@cs.ucla.edu}}
    \and
    Kaixuan Ji\footnotemark[1] \thanks{Department of Computer Science, University of California, Los Angeles, CA 90095, USA; e-mail: {\tt kaixuanji@cs.ucla.edu}}
    \and
    Quanquan Gu\thanks{Department of Computer Science, University of California, Los Angeles, CA 90095, USA; e-mail: {\tt qgu@cs.ucla.edu}}
}
\begin{document}
\date{}
\maketitle
\begin{figure*}[ht!]
    \centering
\includegraphics[width=1.0\textwidth]{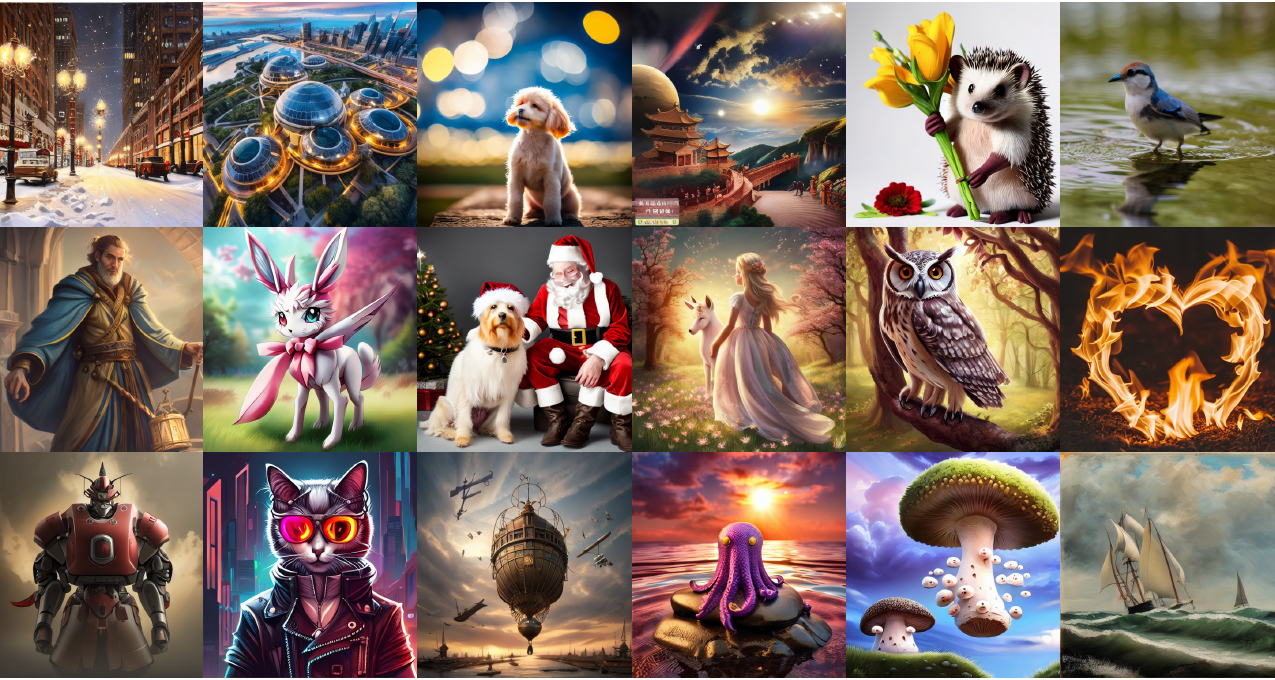}
\caption{We introduce SPIN-Diffusion, a self-play fine-tuning algorithm for diffusion models. The results are fine-tuned from Stable Diffusion v1.5 on the winner images of the Pick-a-Pic dataset. The prompts used for generating the above images are chosen from the Pick-a-Pic test set. The generated images demonstrate superior performance in terms of overall visual attractiveness and coherence with the prompts. SPIN-Diffusion is featured by its independence from paired human preference data, offering a useful tool for fine-tuning on custom datasets with only single image per text prompt provided.}
\end{figure*}

\newpage
\begin{abstract}%
Fine-tuning Diffusion Models remains an underexplored frontier in generative artificial intelligence (GenAI), especially when compared with the remarkable progress made in fine-tuning Large Language Models (LLMs). While cutting-edge diffusion models such as Stable Diffusion (SD) and SDXL rely on supervised fine-tuning, their performance inevitably plateaus after seeing a certain volume of data. Recently, reinforcement learning (RL) has been employed to fine-tune diffusion models with human preference data, but it requires at least two images (``winner'' and ``loser'' images) for each text prompt.
In this paper, we introduce an innovative technique called self-play fine-tuning for diffusion models (SPIN-Diffusion), where the diffusion model engages in competition with its earlier versions, facilitating an iterative self-improvement process. Our approach offers an alternative to conventional supervised fine-tuning and RL strategies, significantly improving both model performance and alignment. Our experiments on the Pick-a-Pic dataset reveal that SPIN-Diffusion outperforms the existing supervised fine-tuning method in aspects of human preference alignment and visual appeal right from its first iteration. By the second iteration, it exceeds the performance of RLHF-based methods across all metrics, achieving these results with less data.
\end{abstract}

\section{Introduction}
Diffusion models~\citep{ho2020denoising, peebles2023scalable, podell2023sdxl, nichol2021glide, rombach2022high, song2020denoising} have rapidly emerged as critical entities within the realm of generative AIs~\citep{creswell2018generative, kingma2013auto}, demonstrating exceptional capabilities in generating high-fidelity outputs. Their versatility spans a diverse area of applications, ranging from image generation~\citep{rombach2022high, podell2023sdxl, ramesh2022hierarchical} to more complex tasks like structure-based drug design~\citep{corso2022diffdock,guan2023decompdiff}, protein structure prediction~\citep{watson2021learning}, text generation~\citep{austin2021structured, zheng2023reparameterized,chen2023fast}, and more. Prominent diffusion models in image generation, including DALL-E~\citep{ramesh2022hierarchical}, Stable Diffusion~\citep{Rombach_2022_CVPR}, SDXL~\citep{podell2023sdxl}, and Dreamlike, etc., typically undergo a fine-tuning process following their initial pre-training phase.

Recently, using Reinforcement Learning (RL) for fine-tuning diffusion models has received increasing attention. 
~\citet{lee2023aligning} first studied the alignment of text-image diffusion models to human preferences using reward-weighted likelihood maximization with a reward function trained on human preference data. \citet{black2023training} formulated the fine-tuning of diffusion models as a RL problem solved by policy gradient optimization. In a concurrent work, \citet{fan2023dpok} studied a similar formulation but with a KL regularization.
Very recently,~\citet{wallace2023diffusion} have bypassed the need for training reward functions by using Direct Preference Optimization (DPO) \citep{rafailov2023direct} for fine-tuning diffusion models. Similar approach was proposed in \citet{yang2023using} as well.

While RL fine-tuning of diffusion methods has been proven effective, its dependency on human preference data, often necessitating multiple images per prompt, poses a significant challenge. 
In many datasets including the community-sourced ones featuring custom content, it is often the case to have only one image associated with each prompt. 
This makes RL fine-tuning infeasible. 


In this paper, drawing inspiration from the recently proposed self-play fine-tuning (SPIN) technique \citep{chen2024self} for large language models (LLM), we introduce a new supervised fine-tuning (SFT) method for diffusion models, eliminating the necessity for human preference data in the fine-tuning process.
Central to our method is a general-sum minimax game,
where both the participating players, namely the main player and the opponent player, are diffusion models.
The main player's goal is to discern between samples drawn from the target data distribution and those generated by the opponent player.
The opponent player's goal is to garner the highest score possible, as assessed by the main player.
A self-play mechanism can be made possible, if and only if the main player and the opponent player have the same structure, and therefore the opponent player can be designed to be previous copies of the main player~\citep{chen2024self}.


When applying the self-play fine-tuning technique~\citep{chen2024self} to diffusion models, there are two challenges: (a) an exponential or even infinite number of possible trajectories can lead to the same image. The generator in a diffusion model operates through a sequence of intermediate steps, but the performance of the generator is only determined by the quality of the image in the last step; and
(b) diffusion models are parameterized by a sequence of score functions, which are the gradient of the probabilities rather than probabilities in LLMs. Our algorithm design effectively surmounts these challenges by (a) designing an objective function that considers all intermediate images generated during the reverse sampling process; and (b) decomposing and approximating the probability function step-by-step into products related to the score function.  We also employ the Gaussian reparameterization technique in DDIM \citep{song2020denoising} to support the advanced sampling method. All these techniques together lead to an unbiased objective function that can be effectively calculated based on intermediate samples. For computational efficiency,  we further propose an approximate objective function, which eliminates the need for intermediate images used in our model. We call our algorithm SPIN-Diffusion.

\noindent\textbf{Contributions.} Our contributions are summarized below:
\begin{itemize}[leftmargin=*]
\setlength\itemsep{-0.5em}
    \item We propose a novel fine-tuning method for diffusion models based on the self-play mechanism, called SPIN-Diffusion. The proposed algorithm iteratively improves upon a diffusion model until converging to the target distribution. Theoretically, we prove that the model obtained by SPIN-Diffusion cannot be further improved via standard SFT. Moreover, the stationary point of our self-play mechanism is achieved when the diffusion model aligns with the target distribution. 

    \item Empirically, we evaluate the performance of SPIN-Diffusion on text-to-image generation tasks~\citep{ramesh2022hierarchical, rombach2022high, saharia2022photorealistic}. Our experiments on the Pick-a-Pic dataset~\citep{kirstain2023pick}, with base model being Stable Diffusion-1.5~\citep{Rombach_2022_CVPR}, demonstrate that SPIN-Diffusion surpasses SFT from the very first iteration. Notably, by the second iteration, SPIN-Diffusion outperforms Diffusion-DPO~\citep{wallace2023diffusion}
    that utilizes additional data from `loser' samples.
By the third iteration, the images produced by SPIN-Diffusion achieve a higher PickScore~\citep{kirstain2023pick} than the base model SD-1.5 $79.8\%$ of the times, and a superior Aesthetic score $88.4\%$ of the times.
    
\end{itemize}
SPIN-Diffusion exhibits a remarkable performance improvement over current state-of-the-art fine-tuning algorithms, retaining this advantage even against models trained with more extensive data usage. This highlights its exceptional efficiency in dataset utilization. It is beneficial for the general public, particularly those with restricted access to datasets containing multiple images per prompt.


\noindent\textbf{Notation.}
We use lowercase letters and lowercase boldface letters to denote scalars and vectors, respectively. We use $0:T$ to denote the index set $\{0, \dots, T\}$. In the function space, let $\cF$ be the function class. We use the symbol $\qb$ to denote the real distribution in a diffusion process, while $\pb_{\btheta}$ represents the distribution parameterized by a nueral network during sampling. The Gaussian distribution is represented as $\cN(\bmu, \bSigma)$, where $\bmu$ and $\bSigma$ are the mean and covariance matrix, respectively. Lastly, $\mathrm{Uniform}\{1,\ldots,T\}$ denotes the uniform distribution over the set $\{1, \ldots, T\}$.

\section{Related Work}
\noindent\textbf{Diffusion Models.}
Diffusion-based generative models \citep{sohl2015deep} have recently gained prominence, attributed to their ability to produce high-quality and diverse samples. A popular diffusion model is
denoising diffusion probabilistic modeling (DDPM) \citep{ho2020denoising}. \citet{song2020denoising} proposed a denoising diffusion implicit model (DDIM), which extended DDPM to a non-Markov diffusion process, enabling a deterministic sampling process and the accelerated generation of high-quality samples. In addition to DDPM and DDIM, diffusion models have also been studied with a score-matching probabilistic model using Langevin dynamics \citep{song2019generative, song2020score}. Diffusion models evolved to encompass guided diffusion models, which are designed to generate conditional distributions. When the conditioning input is text and the output is image, these models transform into text-to-image diffusion models \citep{rombach2022high,ramesh2022hierarchical,ho2022cascaded, saharia2022image}. They bridge the gap between textual descriptions and image synthesis, offering exciting possibilities for content generation. A significant advancement in text-to-image generation is the introduction of Stable Diffusion (SD) \citep{rombach2022high}. SD has expanded the potential of diffusion models by integrating latent variables into the generation process. This innovation in latent diffusion models enables the exploration of latent spaces and improves the diversity of generated content. Despite the introduction of latent spaces, generating images with desired content from text prompts remains a significant challenge \citep{gal2022image,ruiz2023dreambooth}. This is due to the difficulty in learning the semantic properties of text prompts with limited high-quality data.

\noindent\textbf{Fine-Tuning Diffusion Models.}
Efforts to improve diffusion models have focused on aligning them more closely with human preferences. \citet{rombach2022high} fine-tuned a pre-trained model using the COCO dataset \citep{caesar2018coco}, demonstrating superior performance compared to a generative model directly trained on the same dataset. \citet{podell2023sdxl} expanded the model size of Stable Diffusion (SD) to create the SDXL model, which was fine-tuned on a high-quality but private dataset, leading to a significant improvement in the aesthetics of the generated images. \citet{dai2023emu} further demonstrated the effectiveness of fine-tuning and highlighted the importance of the supervised fine-tuning (SFT) dataset. In addition to using datasets with high-quality images, \citet{betker2023improving,segalis2023picture} found that SFT on a data set with high text fidelity can also improve the performance of the diffusion model. The aforementioned methods only requires a high-quality SFT dataset. Recently, preference datasets have been studied in finetuing diffusion models \citep{lee2023aligning}. Concurrently, DDPO \citep{black2023training} and DPOK \citep{fan2023dpok} proposed to use the preference dataset to train a reward model and then fine-tune diffusion models using reinforcement learning. Drawing inspiration from the recent Direct Preference Optimization (DPO) \citep{rafailov2023direct}, Diffusion-DPO \citep{wallace2023diffusion} and D3PO \citep{yang2023using} used the implicit reward to fine-tune diffusion models directly on the preference dataset. Furthermore, when a differentiable reward model is available, \citet{clark2023directly, prabhudesai2023aligning} applied reward backpropagation for fine-tuning diffusion models. Our SPIN-Diffusion is most related to the SFT method, as it only assumes access to high-quality image-text pairs. However, the high-quality image-text dataset can be obtained from various sources, including selecting the winner from a preference dataset or identifying high-reward image-text pairs through a reward model.

\section{Problem Setting and Preliminaries}\label{section:problemsetting}

In this section, we introduce basic settings for text-to-image generation by diffusion models and the self-play fine-tuning (SPIN) method. 

\subsection{Text-to-Image Diffusion Model} 
Denoising diffusion implicit models (DDIM) \citep{song2020denoising} is a generalized framework of denoising diffusion probabilistic models (DDPM) \citep{sohl2015deep, ho2020denoising}. DDIM enables the fast generation of high-quality samples and has been widely used in text-to-image diffusion models such as Stable Diffusion \citep{rombach2022high}. We formulate our method building upon DDIM, which makes it more general.

\noindent\textbf{Forwrd Process.} 
Following \citet{saharia2022image}, the problem of text-to-image generation can be formulated as conditional diffusion models. We use $\xb_0 \in \RR^{d}$ to denote the value of image pixels where $d$ is the dimension and use $\cbb$ to denote the text prompt. Given a prompt $\cbb$, image $\xb_0 $ is drawn from a target data distribution $p_{\mathrm{data}}(\cdot|\cbb)$. The diffusion process is characterized by the following dynamic parameterized by a positive decreasing sequence $\{\alpha_{t}\}_{t=1}^{T}$ with $\alpha_0 = 1$,
\begin{align}
q(\xb_{1: T} |\boldsymbol{x}_0):= q(\xb_T | \xb_0) \prod_{t=2}^T q(\xb_{t-1} | \xb_t, \xb_0),    \label{eq:forward}   
\end{align}
where $q(\xb_{t-1} | \xb_t, \xb_0)$ represents a Gaussian distribution $\cN(\bmu_{t}, \sigma_t^{2}\Ib)$. Here, $\bmu_t$ is the mean of Gaussian defined as
\begin{align*}
\bmu_t:= \sqrt{\alpha_{t-1}} \xb_0  +\sqrt{1-\alpha_{t-1}-\sigma_t^2} \cdot \frac{\xb_t-\sqrt{\alpha_t} \xb_0}{\sqrt{1-\alpha_t}}. 
\end{align*}
It can be derived from \eqref{eq:forward} that $q(\xb_t | \xb_0)= \cN(\sqrt{\alpha_t} \xb_0, (1-\alpha_t) \Ib)$  for all $t$ \citep{song2020denoising}. As a generalized diffusion process of DDPM, \eqref{eq:forward} reduces to DDPM \citep{ho2020denoising} with a special choice of $\sigma_t = \sqrt{(1-\alpha_{t-1})/(1-\alpha_t)}\sqrt{(1-\alpha_t /\alpha_{t-1})}$. 

\noindent\textbf{Generative Process.}
Given the sequence of $\{\alpha_{t}\}_{t=1}^{T}$ and $\{\sigma_t\}_{t=1}^{T}$, examples from the generative model follows 
\begin{align}
p_{\btheta}(\xb_{0:T} | \cbb) = \prod_{t=1}^{T} p_{\btheta}(\xb_{t-1} | \xb_t, \cbb) \cdot p_{\btheta}(\xb_{T} | \cbb), \qquad
p_{\btheta}(\xb_{t-1} | \xb_t, \cbb) = \mathcal{N}\big( \bmu_{\btheta}(\xb_t,\cbb,t), \sigma_{t}^{2} \Ib\big).    \label{eq:backward}
\end{align}
Here $\btheta$ belongs to the parameter space $\bTheta$ and $\bmu_{\btheta}(\xb_t,\cbb,t)$ is the estimator of mean $\bmu_t$ that can be reparameterized \citep{ho2020denoising, song2020denoising} as the combination of $\xb_t$ and a neural network  $\bepsilon_{\btheta}(\xb_t, \cbb, t)$ named score function.
Please see Appendix~\ref{app:detail} for more details.

\noindent\textbf{Training Objective.} The 
score function $\bepsilon_{\btheta}(\xb_t, \cbb, t)$ is trained by minimizing the evidence lower
bound (ELBO) associated with the diffusion models in \eqref{eq:forward} and \eqref{eq:backward}, which is equivalent to minimizing the following denoising score matching objective function $L_{\mathrm{DSM}}$:
\begin{align}
L_{\mathrm{DSM}}(\btheta) = \mathbb{E}\big[\gamma_t\big\| \bepsilon_{\btheta}(\xb_t, \cbb, t) - \bepsilon_t\big\|_2^2\big], \label{eq:DMloss}
\end{align}
where $\xb_t = \sqrt{\alpha_t}\xb_0 + \sqrt{1-\alpha_t}\bepsilon_t$ and the expectation is computed over the distribution $\cbb\sim q(\cdot), \xb_0 \sim q_{\mathrm{data}}(\cdot|\cbb), \bepsilon_t\sim \cN(0, \Ib)$, $t \sim \mathrm{Uniform}\{1,\ldots,T\}$. In addition, $\{\gamma_t\}_{t=1}^{T}$ are pre-specified weights that depends on the sequences $\{\alpha_{t}\}_{t=1}^{T}$ and $\{\sigma_t\}_{t=1}^{T}$.

\subsection{Self-Play Fine-Tuning}
Self-Play mechanism, originating from TD-Gammon \citep{tesauro1995temporal}, has achieved great sucesses in various fields, particularly in strategic games \citep{silver2017mastering, silver2017masteringchess}. Central to Self-Play is the idea of progressively improving a model by competing against its previous iteration. This approach has recently been adapted to fine-tuning Large Language Models (LLMs)~\citep{chen2024self}, called self-play fine-tuning (SPIN).
Considering an LLM where $\cbb$ is the input prompt and $\xb_0$ is the response, the goal of SPIN is to fine-tune an LLM agent, denoted by $p_{\btheta}(\cdot | \cbb)$, based on an SFT dataset.
~\citet{chen2024self} assumed access to a main player and an opponent player at each iteration and takes the following steps iteratively:
\begin{enumerate}[leftmargin=*]
\item The main player maximizes
the expected value gap between the target data distribution $p_{\mathrm{data}}$ and the opponent player’s distribution $p_{\btheta_k}$:
\item The opponent player generates responses that are indistinguishable from $p_{\mathrm{data}}$ by the main player.
\end{enumerate}
Instead of alternating optimization, SPIN directly utilizes a closed-form solution of the opponent player, which results in the opponent player at iteration $k+1$ to copy parameters $\btheta_{k+1}$, and forming an end-to-end training objective:
 \begin{align}
L_{\method} = \EE\bigg[\ell\bigg(\lambda \log \frac{p_{\btheta}(\xb_{0} | \cbb)}{p_{\btheta_k}(\xb_{0} | \cbb)}-\lambda \log \frac{p_{\btheta}(\xb_{0}' | \cbb)}{p_{\btheta_k}(\xb_{0}' | \cbb)}\bigg)\bigg]. \label{eq:spin}
\end{align}
Here the expectation is taken over the distribution $\cbb \sim q(\cbb), \xb \sim p_{\mathrm{data}}(\xb|\cbb), \xb' \sim p_{\btheta_k}(\xb'|\cbb)$, $\ell(\cdot)$ is a loss function that is both monotonically decreasing and convex, and $\lambda>0$ is a hyperparameter. 
Notably,~\eqref{eq:spin} only requires the knowledge of demonstration/SFT data, i.e., prompt-response pairs. 


\section{Method}\label{sec:method}
In this section, we are going to present a method for fine-tuning diffusion models with self-play mechanisam. 

Consider a setting where we are training on a high-quality dataset containing image-text pairs $(\cbb, \xb_0) \sim p_{\mathrm{data}}(\xb_0|\cbb)q(\cbb)$ where $\cbb$ is the text prompt and $\xb_0$ is the image. Our goal is to fine-tune a pretrained diffusion model, denoted by $p_{\btheta}$, to align with the distribution $p_{\mathrm{data}}(\xb_0|\cbb)$. Instead of directly minimizing the denoising score matching objective function $L_{\mathrm{DSM}}$ in \eqref{eq:DMloss}, we adapt $\method$ to diffusion models. However, applying SPIN to fine-tuning diffusion models presents unique challenges.  Specifically, the objective of $\method$ \eqref{eq:spin} necessitates access to the marginal probability $p_{\btheta}(\xb_0|\cbb)$. While obtaining $p_{\btheta}(\xb_0|\cbb)$ is straightforward in LLMs, this is not the case with diffusion models. Given the parameterization of the diffusion model as $p_{\btheta}(\xb_{0:T}|\cbb)$, computing the marginal probability $p_{\btheta}(\xb_0|\cbb)$ requires integration over all potential trajectories $\int_{\xb_{1:T}}p_{\btheta}(\xb_{0:T}|\cbb)d\xb_{1:T}$, which is computationally intractable. 

In the following, we propose a novel SPIN-Diffusion method with a decomposed objective function that only requires the estimation of score function $\bepsilon_{\btheta}$. 
This is achieved by employing the DDIM formulation discussed in Section \ref{section:problemsetting}.
The key technique is self-play mechanism with a focus on the joint distributions of the entire diffusion process, i.e., $p_{\mathrm{data}}(\xb_{0:T}|\cbb) = q(\xb_{1:T}|\xb_0) p_{\mathrm{data}}(\xb_0|\cbb)$ and $p_{\btheta}(\xb_{0:T}|\cbb)$, instead of marginal distributions.

\begin{algorithm}[t]
\caption{Self-Play Diffusion (SPIN-Diffusion)}\label{alg:Improving}
\begin{algorithmic}
\STATE \textbf{Input:} $\{(\xb_0, \cbb)\}_{i\in [N]}$: SFT Dataset, $p_{\btheta_0}$: Diffusion Model with parameter $\btheta_0$, $K$: Number of iterations. 
\FOR{$k= 0,\ldots, K-1$}
\FOR{$i = 1, \ldots N$}
\STATE Generate real diffusion trajectories $\xb_{1:T} \sim q(\xb_{1:T}|\xb_0)$.
\STATE Generate synthetic diffusion trajectories $\xb'_{0:T} \sim p_{\btheta_k}(\cdot|\cbb)$.
\ENDFOR 
\STATE Update $\btheta_{k+1} = \argmin_{\btheta \in \bTheta} \hat{L}_{\method}(\btheta, \btheta_k)$, which is the empirical version of \eqref{eq:loss_score} or \eqref{eq:newloss} .
\ENDFOR
\STATE \textbf{Output:} $\btheta_K$.
\end{algorithmic}
\end{algorithm}

\subsection{Differentiating Diffusion Processes}
In iteration $k+1$, we focus on training a function $f_{k+1}$ to differentiate between the diffusion trajectory $\xb_{0:T}$ generated by the diffusion model parameterized by $p_{\btheta}(\xb_{0:T}|\cbb)$, and the diffusion process $p_{\mathrm{data}}(\xb_{0:T}|\cbb)$ from the data. Specifically, the training of $f_{k+1}$ involves minimizing a generalized Integral Probability Metric (IPM) \citep{muller1997integral}:
\begin{align}
f_{k+1} = \argmin_{f \in \cF_{k}}\EE\big[ \ell\big(f(\cbb, \xb_{0:T}) - f(\cbb, \xb'_{0:T})\big) 
\big].   \label{eq:f-star}
\end{align}
Here, the expectation is taken over the distributions $\cbb\sim q(\cdot), \xb_{0:T}\sim p_{\mathrm{data}}(\cdot|\cbb)$, and $ \xb'_{0:T}\sim p_{\btheta_k}(\cdot|\cbb)$. $\cF_{k}$ denotes the class of functions under consideration and $\ell(\cdot)$ is a monotonically decreasing and convex function that helps stabilize training.  The value of $f$ reflects the degree of belief that the diffusion process $\xb_{0:T}$ given context $\cbb$ originates from the target diffusion process 
$p_{\mathrm{data}}(\xb_{0:T}|\cbb)$
rather than the diffusion model $p_{\btheta}(\xb_{0:T}|\cbb)$. We name $f$ the test function.

\subsection{Deceiving the Test Function} 
The opponent player wants to maximize the expected value $\EE_{\cbb \sim q(\cdot), \xb_{0:T}\sim p(\cdot|\cbb)} [f_{k+1}(\cbb, \xb)]$.
In addition, to prevent excessive deviation of $p_{\btheta_{k+1}}$ from $p_{\btheta_{k}}$ and stabilize the self-play fine-tuning, we incorporate a Kullback-Leibler (KL) regularization term. Putting these together gives rise to the following optimization problem:
\begin{align}
\argmax_ {p}&\EE_{\cbb \sim q(\cdot), \xb_{0:T}\sim p(\cdot|\cbb)} [f_{k+1}(\cbb,\xb_{0:T})] - \lambda \EE_{\cbb\sim q(\cdot)}\mathrm{KL}\big(p(\cdot|\cbb)||p_{\btheta_k}(\cdot|\cbb)\big), \label{eq: update}
\end{align} 
where $\lambda>0$ is the regularization parameter. Notably, \eqref{eq: update} has a closed-form solution $\hat{p}(\cdot|\cbb)$: 
\begin{align}
\hat{p}(\xb_{0:T}|\cbb) \propto p_{\btheta_k}(\xb_{0:T}|\cbb) \exp\big(\lambda^{-1}f_{k+1}(\cbb, \xb_{0:T})\big). \label{eq:closed form solution}  
\end{align}
To ensure that $\hat{p}$ lies in the diffusion process space $\{p_{\btheta}(\cdot|\cbb)|\btheta \in \bTheta\}$, we utilize the following test function class \citep{chen2024self}:
\begin{align}
\cF_{k} = \bigg\{\lambda\cdot \log \frac{p_{\btheta}(\xb_{1:T} | \cbb)}{p_{\mathrm{\btheta_k}}(\xb_{1:T} | \cbb)}\bigg|\btheta \in \bTheta\bigg\}.  \label{eq:function class0} 
\end{align}
Given the choice of $\cF_k$ in \eqref{eq:function class0}, optimizing~\eqref{eq:f-star} gives $f_{k+1}$ parameterized by $\btheta_{k+1}$ in the following form:
\begin{align}
f_{k+1}(\cbb, \xb_{0:T}) = \lambda\cdot \log \frac{p_{\btheta_{k+1}}(\xb_{0:T} | \cbb)}{p_{\mathrm{\btheta_k}}(\xb_{0:T} | \cbb)}.    \label{eq:t+1}
\end{align}
Substituting \eqref{eq:t+1} into \eqref{eq:closed form solution} yields $\hat{p}(\xb_{0:T}|\cbb) = p_{\btheta_{k+1}}(\xb_{0:T}|\cbb)$. 
In other words, $\btheta_{k+1}$ learned from \eqref{eq:f-star} is exactly the diffusion parameter for the ideal choice of opponent. 

\subsection{Decomposed Training Objective} 
The above two steps provide a training scheme depending on the full trajectory of $\xb_{0:T}$. Specifically, substituting~\eqref{eq:function class0} into~\eqref{eq:f-star} yields the update rule $\btheta_{k+1} = \argmin_{\btheta \in \bTheta}L_{\method}(\btheta, \btheta_k)$, where $L_{\method}$ is defined as:
\begin{align}
L_{\method} = \EE\bigg[\ell\bigg(\lambda \log \frac{p_{\btheta}(\xb_{0:T} | \cbb)}{p_{\btheta_k}(\xb_{0:T} | \cbb)}-\lambda \log \frac{p_{\btheta}(\xb_{0:T}' | \cbb)}{p_{\btheta_k}(\xb_{0:T}' | \cbb)}\bigg)\bigg]. \label{eq:loss}  
\end{align}
Here the expectation is taken over the distributions $\cbb \sim q(\cdot), \xb_{0:T} \sim p_{\mathrm{data}}(\cdot|\cbb), \xb'_{0:T} \sim p_{\btheta_k}(\cdot|\cbb)$.
To formulate a computationally feasible objective, we decompose $\log p_{\btheta}(\xb_{0:T} | \cbb)$ using the backward process of diffusion models. Substituting \eqref{eq:backward} into~\eqref{eq:loss}, we have that 
\begin{align}
\log p_{\btheta}(\xb_{0:T} | \cbb) 
&= \log\bigg(\prod_{t=1}^{T} p_{\btheta}(\xb_{t-1} | \xb_t, \cbb) \cdot p_{\btheta}(\xb_{T} | \cbb)\bigg) \notag\\
&= \log p_{\btheta}(\xb_{T} | \cbb) + \sum_{t=1}^{T}\log\big(p_{\btheta}(\xb_{t-1} | \xb_t, \cbb) \big) \notag \\
&= \text{Constant} - \sum_{t=1}^{T}\frac{1}{2\sigma_t^{2}}\big\|\xb_{t-1} - \bmu_{\btheta}(\xb_t,\cbb,t)\big\|_{2}^{2}. \label{eq:p_theta}
\end{align}
where the last equality holds since $p_{\btheta}(\xb_{t-1} | \xb_t, \cbb)$ is a Gaussian distribution $\mathcal{N}\big( \bmu_{\btheta}(\xb_t,\cbb,t), \sigma_{t}^{2} \Ib\big)$ according to \eqref{eq:backward}, and $ p_{\btheta}(\xb_{T} | \cbb)$ is approximately a Gaussian independent of $\btheta$. By substituting \eqref{eq:p_theta} into \eqref{eq:loss} and introducing a reparameterization $\sigma_t^{2} = \lambda T/(2\beta_t)$, where $\beta_{t}$ is a fixed positive value, we obtain
\begin{align}
L_{\method}(\btheta, \btheta_k)&=\EE\bigg[\ell\bigg(- \sum_{t=1}^{T}\frac{\beta_t}{T}\big[\big\|\xb_{t-1} - \bmu_{\btheta}(\xb_t,\cbb,t)\big\|_{2}^{2}  - \big\|\xb_{t-1} - \bmu_{\btheta_k}(\xb_t,\cbb,t)\big\|_{2}^{2}\notag\\
&\qquad- \big\|\xb'_{t-1}- \bmu_{\btheta}(\xb'_t,\cbb,t)\big\|_{2}^{2}
+\big\|\xb'_{t-1} - \bmu_{\btheta_k}(\xb'_t,\cbb,t)\big\|_{2}^{2}\big]\bigg)\bigg]. \label{eq:loss_score}  
\end{align}
Here the expectation is taken over the distributions $\cbb \sim q(\cdot), \xb_{0:T} \sim p_{\mathrm{data}}(\cdot|\cbb), \xb'_{0:T} \sim p_{\btheta_k}(\cdot|\cbb)$.
The detailed algorithm is presented in Algorithm~\ref{alg:Improving}. \eqref{eq:loss_score} naturally provides an objective function for DDIM with $\sigma_{t} > 0$, where $\sigma_t$ controls the determinism of the reverse process \eqref{eq:backward}. \eqref{eq:loss_score} remains valid for deterministic generation processes as $\sigma_{t} \rightarrow 0$.




\subsection{Approximate Training Objective}
While \eqref{eq:loss_score} is the exact ELBO, optimizing it requires storing all intermediate images during the reverse sampling, which is not memory-efficient. To address this limitation, we propose an approximate objective function.  By applying Jensen’s inequality and the convexity of the loss function $\ell$, we can give an upper bound of \eqref{eq:loss_score} and thus move the average over $t$ outside the loss function $\ell$: 
\begin{align}
L_{\method}^{\mathrm{approx}}(\btheta, \btheta_k)&=\EE\bigg[\ell\Big(- \beta_t\Big[\big\|\xb_{t-1} - \bmu_{\btheta}(\xb_t,\cbb,t)\big\|_{2}^{2}  - \big\|\xb_{t-1} - \bmu_{\btheta_k}(\xb_t,\cbb,t)\big\|_{2}^{2} \notag\\
&\qquad
- \big\|\xb'_{t-1}- \bmu_{\btheta}(\xb'_t,\cbb,t)\big\|_{2}^{2}+\big\|\xb'_{t-1} - \bmu_{\btheta_k}(\xb'_t,\cbb,t)\big\|_{2}^{2}\Big]\Big)\bigg], \label{eq:newloss}    
\end{align}
where the expectation is taken over the distributions $\cbb \sim q(\cbb), (\xb_{t-1}, \xb_{t}) \sim p_{\mathrm{data}}(\xb_{t-1}, \xb_{t}|\cbb), (\xb'_{t-1}, \xb'_{t}) \sim  p_{\btheta_{k}}(\xb'_{t-1}, \xb'_{t}|\cbb)$, $t \sim \mathrm{Uniform}\{1,\ldots,T\}$. 

The following lemma shows that $L_{\method}^{\mathrm{approx}}$ is an upper bound of $L_{\method}$. 
\begin{lemma}\label{lm:uppderbound}
Fix $\btheta_k \in \bTheta$ which serves as the starting point of Algorithm~\ref{alg:Improving} for iteration $k+1$. It holds that $L_{\method}(\btheta, \btheta_k) \leq L_{\method}^{\mathrm{approx}}(\btheta, \btheta_k)$ for all $\btheta \in \bTheta$.
\end{lemma}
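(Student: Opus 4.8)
The plan is to derive $L_{\method}^{\mathrm{approx}}$ from $L_{\method}$ by recognizing the inner sum over $t$ in \eqref{eq:loss_score} as an average (after pulling out the factor $1/T$), and then applying Jensen's inequality to move that average outside the convex, decreasing loss $\ell$. Concretely, write the argument of $\ell$ in \eqref{eq:loss_score} as $\frac{1}{T}\sum_{t=1}^{T} g_t(\cbb,\xb_{0:T},\xb'_{0:T})$, where
\[
g_t := -\beta_t\big[\|\xb_{t-1}-\bmu_{\btheta}(\xb_t,\cbb,t)\|_2^2 - \|\xb_{t-1}-\bmu_{\btheta_k}(\xb_t,\cbb,t)\|_2^2 - \|\xb'_{t-1}-\bmu_{\btheta}(\xb'_t,\cbb,t)\|_2^2 + \|\xb'_{t-1}-\bmu_{\btheta_k}(\xb'_t,\cbb,t)\|_2^2\big].
\]
Since $\ell$ is convex, Jensen's inequality gives $\ell\big(\frac{1}{T}\sum_t g_t\big) \le \frac{1}{T}\sum_t \ell(g_t)$ pointwise for each fixed realization of $(\cbb,\xb_{0:T},\xb'_{0:T})$.

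Next I would take expectations of both sides over $\cbb\sim q(\cdot),\ \xb_{0:T}\sim p_{\mathrm{data}}(\cdot|\cbb),\ \xb'_{0:T}\sim p_{\btheta_k}(\cdot|\cbb)$. The left side becomes exactly $L_{\method}(\btheta,\btheta_k)$. For the right side, $\frac{1}{T}\sum_{t=1}^T \EE[\ell(g_t)]$ is precisely an expectation over $t\sim\mathrm{Uniform}\{1,\ldots,T\}$ of $\EE[\ell(g_t)]$; and since $g_t$ depends on $\xb_{0:T},\xb'_{0:T}$ only through the pairs $(\xb_{t-1},\xb_t)$ and $(\xb'_{t-1},\xb'_t)$, the full-trajectory expectation can be marginalized down to the pairwise marginals $(\xb_{t-1},\xb_t)\sim p_{\mathrm{data}}(\xb_{t-1},\xb_t|\cbb)$ and $(\xb'_{t-1},\xb'_t)\sim p_{\btheta_k}(\xb'_{t-1},\xb'_t|\cbb)$. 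This matches exactly the expectation defining $L_{\method}^{\mathrm{approx}}$ in \eqref{eq:newloss}, giving $L_{\method}(\btheta,\btheta_k)\le L_{\method}^{\mathrm{approx}}(\btheta,\btheta_k)$.

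There is really only one subtlety to be careful about, which I expect to be the main (albeit minor) obstacle: one must verify that the change from integrating against the joint $p_{\mathrm{data}}(\xb_{0:T}|\cbb)$ (resp.\ $p_{\btheta_k}$) to integrating $\ell(g_t)$ against the marginal $p_{\mathrm{data}}(\xb_{t-1},\xb_t|\cbb)$ (resp.\ $p_{\btheta_k}$) is valid — i.e., that the two synthetic and real trajectories are sampled independently and that $g_t$ genuinely factors through only the two adjacent coordinates, so Fubini/marginalization applies termwise. I would also note explicitly that convexity of $\ell$ is the only property used (monotonicity is not needed here), and that the bound holds for every $\btheta\in\bTheta$ since the pointwise Jensen step is uniform in $\btheta$. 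Assembling these observations yields the claimed inequality.
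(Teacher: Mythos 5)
Your proposal is correct and follows essentially the same route as the paper's proof: apply Jensen's inequality (using only convexity of $\ell$) to pull the average over $t$ outside the loss, reinterpret that average as an expectation over $t\sim\mathrm{Uniform}\{1,\ldots,T\}$, and then marginalize the full-trajectory expectations down to the pairwise marginals $(\xb_{t-1},\xb_t)$ and $(\xb'_{t-1},\xb'_t)$ since the integrand depends only on those coordinates. The subtlety you flag about Fubini/marginalization is the same observation the paper makes in the last step of its derivation.
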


$L_{\method}^{\mathrm{approx}}$ eliminates the need to store all intermediate steps, as it only involves two consecutive sampling steps $t-1$ and $t$. Since the reverse process $p_{\btheta}(\xb'_{1:T}|\xb'_0, \cbb)$ approximates the forward process $q(\xb'_{1:T}|\xb'_0)$, we use the per step forward process 
$q(\xb'_{t-1}, \xb'_{t}|\xb'_{0})$ to approximate $p_{\btheta_{k}}(\xb'_{t-1}, \xb'_{t}|\xb'_0,\cbb)$. We can further approximate $p_{\btheta_{k}}(\xb'_{t-1}, \xb'_{t}|\cbb) = \int p_{\btheta_{k}}(\xb'_{t-1}, \xb'_{t}|\xb'_0,\cbb)p_{\btheta_{k}}(\xb'_0|\cbb) d\xb'_0$ with $\int q(\xb'_{t-1}, \xb'_{t}|\xb'_{0})p_{\btheta_{k}}(\xb'_0|\cbb)d\xb'_0$. 
Substituting the corresponding terms in \eqref{eq:newloss} with the above approximation allows us to only compute the expectation of \eqref{eq:newloss} over the distribution $\cbb \sim q(\cbb)$,  $(\xb_{t-1}, \xb_{t}) \sim p_{\mathrm{data}}(\xb_{t-1}, \xb_{t}|\cbb)$, $(\xb_{t-1}', \xb'_{t}) \sim \int p_{\btheta_{k}}(\xb'_{0}|\cbb)q(\xb'_{t-1}, \xb'_{t}|\xb'_{0})d\xb'_0$, $t \sim \mathrm{Uniform}\{1,\ldots,T\}$. Furthermore, by incorporating the reparameterization of $\bmu_{\btheta}$ into \eqref{eq:loss_score} and \eqref{eq:newloss}, we can express \eqref{eq:loss_score} and \eqref{eq:newloss} in terms of $\bepsilon_{\btheta}(\xb_{t}, \cbb, t)$. Detailed derivations of \eqref{eq:loss_score} and \eqref{eq:newloss} are provided in Appendix \ref{app:detail}.

\section{Main Theory}\label{sec:thm}
In this section, we provide a theoretical analysis of Algorithm~\ref{alg:Improving}. Section~\ref{sec:method} introduces two distinct objective functions, as defined in \eqref{eq:loss_score} and \eqref{eq:newloss}, both of which use the loss function $\ell$. 
Since \eqref{eq:loss_score} is an exact objective function, its analysis closely follows the framework established by \citet{chen2024self}. Consequently, we instead focus on the approximate objective function $L_{\method}^{\mathrm{approx}}$ defined in~\eqref{eq:newloss}, which is more efficient to optimize and is the algorithm we use in our experiments.  However, its behavior is more difficult to analyze.  We begin with a formal assumption regarding the loss function $\ell$ as follows.
\begin{assumption} \label{assm:1}
The loss function $\ell(t): \RR \rightarrow \RR$ is monotonically decreasing, i.e., $\forall t, \ell'(t) \leq 0$ and satisfies $\ell'(0) < 0$. In addition, $\ell(t)$ is a convex function.
\end{assumption}
Assumption~\ref{assm:1} can be satisfied by various commonly used loss functions in machine learning. This includes the correlation loss $\ell(t) = 1 - t$, the hinge loss $\ell(t) = \max(0,1 - t)$, and the logistic loss $\ell(t) = \log(1 + \exp(-t))$.

To understand the behavior of SPIN-Diffusion, let us first analyze the gradient of the objective function~\eqref{eq:newloss}, 
\begin{align}
\nabla L_{\method}^{\mathrm{approx}} =\EE\Big[\underbrace{(-\beta_t\ell'_{t})}_{\mathrm{Reweighting}}\cdot\big( \underbrace{\nabla_{\btheta}\big\|\xb_{t-1} - \bmu_{\btheta}(\xb_t,\cbb,t)\big\|_{2}^{2}}_{\mathrm{Matching}} - \underbrace{\nabla_{\btheta}\big\|\xb'_{t-1}- \bmu_{\btheta}(\xb'_t,\cbb,t)\big\|_{2}^{2}}_{\mathrm{Pushing}}\big)\Big],  \label{eq:gradient}
\end{align}
where the expectation is taken over the distributions $\cbb \sim q(\cbb), (\xb_{t-1}, \xb_{t}) \sim p_{\mathrm{data}}(\xb_{t-1}, \xb_{t}|\cbb), (\xb'_{t-1}, \xb'_{t}) \sim  p_{\btheta_{k}}(\xb'_{t-1}, \xb'_{t}|\cbb)$.~\eqref{eq:gradient} can be divided into three parts:
\begin{itemize}[leftmargin=*]
 \item \textbf{Reweighting: } $\ell'(\cdot)$ in the ``Reweighting'' term is negative and increasing  because $\ell()$ is monotonically decreasing and convex according to Assumption~\ref{assm:1}.
 Therefore, $-\beta_{t}\ell'_{t} = -\beta_t\ell'\big(- \beta_t\big[\|\xb_{t-1} - \bmu_{\btheta}(\xb_t,\cbb,t)\|_{2}^{2}- \ldots +\|\xb'_{t-1} - \bmu_{\btheta_k}(\xb'_t,\cbb,t)\|_{2}^{2}\big]\big)$  is always non-negative. Furthermore, $-\beta_{t}\ell'_{t}$ decreases as the argument inside $\ell()$ increases.
 \item \textbf{Matching: } The ``Matching'' term matches $\bmu_{\btheta}(\xb_{t}, \cbb, t)$ to $\xb_{t-1}$ coming from pairs $(\xb_{t-1}, \xb_{t})$, that are sampled from the target distribution.  This increases the likelihood of $(\xb_{t-1}, \xb_{t})\sim p_{\mathrm{data}}(\xb_{t-1}, \xb_{t})$  following the generative process \eqref{eq:backward}. 
 \item \textbf{Pushing: } Contrary to the ``Matching'' term, the ``Pushing'' term pushes $\bmu_{\btheta}(\xb_{t}', \cbb, t)$ away from  $\xb_{t-1}'$ coming from pairs $(\xb'_{t-1}, \xb'_{t})$ drawn from the synthetic distribution $p_{\btheta_{k}}(\xb'_{t-1}, \xb'_{t})$. Therefore, the ``Pushing'' term decreases the likelihood of these samples following the process in the generative process~\eqref{eq:backward}.
\end{itemize}

The ``Matching'' term aligns conceptually with the $L_{\mathrm{DSM}}$ in SFT, as both aim to maximize the likelihood that the target trajectory $\mathbf{x}_{0:T}$ follows the generative process described in \eqref{eq:backward}. The following theorem shows a formal connection, which is pivotal for understanding the optimization dynamics of our method.

\begin{theorem}\label{thm:notstop}
Under Assumption~\ref{assm:1}, if $\btheta_{k}$ is not the global optimum of $L_{\mathrm{DSM}}$ in \eqref{eq:DMloss}, there exists an appropriately chosen $\beta_t$, such that $\btheta_{k}$ is not the global minimum of \eqref{eq:newloss} and thus $\btheta_{k+1}\not=\btheta_k$.
\end{theorem}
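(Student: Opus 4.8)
The plan is to examine the gradient of $L_{\method}^{\mathrm{approx}}(\cdot,\btheta_k)$ at the point $\btheta=\btheta_k$ and show that, for a suitable choice of $\{\beta_t\}$, it does not vanish; since $\btheta_{k+1}=\argmin_{\btheta\in\bTheta}L_{\method}^{\mathrm{approx}}(\btheta,\btheta_k)$ is a global minimizer and hence a stationary point (taking $\btheta_k$ interior and the objective differentiable), this forces $\btheta_{k+1}\neq\btheta_k$. The first step is to evaluate \eqref{eq:gradient} at $\btheta=\btheta_k$. There the ``Matching'' and ``Pushing'' contributions inside $\ell$ cancel, so the argument of $\ell$ is identically $0$ and the ``Reweighting'' factor collapses to the constant $-\beta_t\ell'(0)$, which is strictly positive by Assumption~\ref{assm:1}. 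Hence $\nabla L_{\method}^{\mathrm{approx}}(\btheta_k,\btheta_k)=-\ell'(0)\,\EE\big[\beta_t\big(\nabla_\btheta\|\xb_{t-1}-\bmu_\btheta(\xb_t,\cbb,t)\|_2^2-\nabla_\btheta\|\xb'_{t-1}-\bmu_\btheta(\xb'_t,\cbb,t)\|_2^2\big)\big|_{\btheta=\btheta_k}\big]$, with the expectation over $\cbb\sim q$, $(\xb_{t-1},\xb_t)\sim p_{\mathrm{data}}$, $(\xb'_{t-1},\xb'_t)\sim p_{\btheta_k}$, $t\sim\mathrm{Uniform}\{1,\dots,T\}$.

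Next I would show that the ``Pushing'' expectation vanishes at $\btheta_k$: by \eqref{eq:backward}, under $p_{\btheta_k}$ we have $\xb'_{t-1}\mid\xb'_t,\cbb\sim\cN(\bmu_{\btheta_k}(\xb'_t,\cbb,t),\sigma_t^2\Ib)$, so the residual $\xb'_{t-1}-\bmu_{\btheta_k}(\xb'_t,\cbb,t)$ has conditional mean $0$, and $\nabla_\btheta\|\xb'_{t-1}-\bmu_\btheta(\xb'_t,\cbb,t)\|_2^2\big|_{\btheta_k}$ is linear in it. For the ``Matching'' term I would use $p_{\mathrm{data}}(\xb_{t-1},\xb_t|\cbb)=\int q(\xb_t|\xb_0)\,q(\xb_{t-1}|\xb_t,\xb_0)\,p_{\mathrm{data}}(\xb_0|\cbb)\,\mathrm{d}\xb_0$ and integrate out $q(\xb_{t-1}|\xb_t,\xb_0)=\cN(\bmu_t,\sigma_t^2\Ib)$; the variance only produces a $\btheta$-independent constant, so $\EE_{p_{\mathrm{data}}}\|\xb_{t-1}-\bmu_\btheta(\xb_t,\cbb,t)\|_2^2=\EE_{\xb_0,\bepsilon_t}\|\bmu_t-\bmu_\btheta(\xb_t,\cbb,t)\|_2^2+\mathrm{const}$. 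Substituting the DDIM reparameterizations of $\bmu_\btheta$ and of $\bmu_t$ (using $\xb_t=\sqrt{\alpha_t}\xb_0+\sqrt{1-\alpha_t}\bepsilon_t$) yields $\bmu_t-\bmu_\btheta(\xb_t,\cbb,t)=c_t\big(\bepsilon_t-\bepsilon_\btheta(\xb_t,\cbb,t)\big)$ with $c_t=\sqrt{1-\alpha_{t-1}-\sigma_t^2}-\sqrt{\alpha_{t-1}(1-\alpha_t)/\alpha_t}$, whence $\EE_{p_{\mathrm{data}}}\nabla_\btheta\|\xb_{t-1}-\bmu_\btheta(\xb_t,\cbb,t)\|_2^2=c_t^2\,G_t(\btheta)$, where $G_t(\btheta):=\nabla_\btheta\EE_{\cbb,\xb_0,\bepsilon_t}\|\bepsilon_\btheta(\xb_t,\cbb,t)-\bepsilon_t\|_2^2$ and $\nabla L_{\mathrm{DSM}}(\btheta)=\frac1T\sum_{t=1}^T\gamma_t G_t(\btheta)$. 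Combining the three observations, $\nabla L_{\method}^{\mathrm{approx}}(\btheta_k,\btheta_k)=-\frac{\ell'(0)}{T}\sum_{t=1}^T\beta_t c_t^2\,G_t(\btheta_k)$.

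To conclude, I would work in function space as in \citet{chen2024self}, where $L_{\mathrm{DSM}}$ is a convex quadratic functional of $\bepsilon_\btheta$ whose (weight-independent) minimizer is $\bepsilon^\star(\xb_t,\cbb,t)=\EE[\bepsilon_t\mid\xb_t,\cbb]$; thus ``$\btheta_k$ is not the global optimum of $L_{\mathrm{DSM}}$'' means $G_{t^\star}(\btheta_k)\neq 0$ for some $t^\star$ (with $c_{t^\star}\neq 0$). Choosing $\beta_t$ small for all $t\neq t^\star$ then makes $\nabla L_{\method}^{\mathrm{approx}}(\btheta_k,\btheta_k)$ dominated by the nonzero term $-\frac{\ell'(0)}{T}\beta_{t^\star}c_{t^\star}^2 G_{t^\star}(\btheta_k)$ (equivalently, taking $\beta_t\propto\gamma_t/c_t^2$ makes it a nonzero multiple of $\nabla L_{\mathrm{DSM}}(\btheta_k)$). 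Hence $\nabla L_{\method}^{\mathrm{approx}}(\btheta_k,\btheta_k)\neq 0$, so $\btheta_k$ is not a stationary point and a fortiori not the global minimizer of $L_{\method}^{\mathrm{approx}}(\cdot,\btheta_k)$, giving $\btheta_{k+1}\neq\btheta_k$.

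The main obstacle is the identification of the ``Matching'' gradient with a reweighted $\nabla L_{\mathrm{DSM}}$: this requires carrying the forward-posterior tower property and the DDIM mean reparameterization through carefully, checking that every variance/normalization term drops out as a $\btheta$-independent constant, and it forces one to commit to the function-space reading of ``not the global optimum of $L_{\mathrm{DSM}}$'' so that the discrepancy cannot be washed out by the reweighting and a single timestep $t^\star$ already suffices. The ``Pushing'' cancellation and the final stationarity argument are routine by comparison.
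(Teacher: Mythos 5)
Your proposal is a genuine alternative route, and parts of it are correct and illuminating, but it contains a real gap that the paper's actual proof is designed precisely to avoid.

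Your plan is to show $\nabla_\btheta L_{\method}^{\mathrm{approx}}(\btheta_k,\btheta_k)\neq 0$, from which non-stationarity and hence non-optimality follow. The paper instead exhibits an explicit competitor: since $\btheta_k$ is not a global optimum of $L_{\mathrm{DSM}}$, there is some $\btheta^*\in\bTheta$ with $L_{\mathrm{DSM}}(\btheta^*)<L_{\mathrm{DSM}}(\btheta_k)$, and the paper studies the scalar function $g(s):=L_{\method}^{\mathrm{approx}}(\btheta^*,\btheta_k)$ under the reweighting $\beta_t=s\gamma_t/h_t^2$. At $s=0$ the argument of $\ell$ is identically zero, so $g(0)=\ell(0)=L_{\method}^{\mathrm{approx}}(\btheta_k,\btheta_k)$, and $g'(0)=-\ell'(0)\big(\EE[\gamma_t\|\bepsilon_t-\bepsilon_{\btheta^*}\|^2]-\EE[\gamma_t\|\bepsilon_t-\bepsilon_{\btheta_k}\|^2]-\EE[\gamma_t\|\bepsilon_{\btheta_k}-\bepsilon_{\btheta^*}\|^2]\big)<0$: the first bracket is strictly negative by the DSM gap, the pushing term only makes it more negative, and $-\ell'(0)>0$. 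Thus for small $s>0$ one gets $L_{\method}^{\mathrm{approx}}(\btheta^*,\btheta_k)<L_{\method}^{\mathrm{approx}}(\btheta_k,\btheta_k)$ directly, with no differentiation in $\btheta$ at all.

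The gap in your version is the deduction ``$\btheta_k$ not a global optimum of $L_{\mathrm{DSM}}$ $\Longrightarrow$ $G_{t^\star}(\btheta_k)\neq 0$'' (equivalently $\nabla_\btheta L_{\mathrm{DSM}}(\btheta_k)\neq 0$ after your choice of $\beta_t$). That implication is false in parameter space: $L_{\mathrm{DSM}}$ is non-convex in $\btheta$ for a neural score network, so $\btheta_k$ could be a spurious critical point (local minimum or saddle) while failing to be globally optimal, and then every $G_t(\btheta_k)$ can vanish and your computed gradient is zero. You notice this and propose to repair it by ``committing to the function-space reading'', but the theorem as stated makes no such assumption, and the paper's proof does not need it: by working with a finite difference $L_{\mathrm{DSM}}(\btheta^*)-L_{\mathrm{DSM}}(\btheta_k)<0$ between two existing parameters rather than with $\nabla_\btheta L_{\mathrm{DSM}}(\btheta_k)$, the paper matches ``not a global optimum'' exactly and stays entirely in parameter space. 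Your observation that the pushing gradient vanishes in expectation at $\btheta_k$, and that $\beta_t\propto\gamma_t/h_t^2$ turns the matching gradient into a reweighted $\nabla L_{\mathrm{DSM}}$, are both correct (and the second is in fact the same reweighting the paper uses); the problem is only the final inference from non-optimality to non-criticality. If you replace the stationarity argument with the paper's directional-derivative-in-$\beta$ argument applied to the pair $(\btheta^*,\btheta_k)$, your computation of the matching and pushing contributions slots straight in and the proof closes without any realizability hypothesis.
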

Theorem~\ref{thm:notstop} suggests that the optimization process stops only when $\btheta$ reaches global optimality of $L_{\mathrm{DSM}}$. Consequently, the optimal diffusion model $\btheta^{*}$ found by Algorithm~\ref{alg:Improving} cannot be further improved using $L_{\mathrm{DSM}}$. 
This theoretically supports that SFT with \eqref{eq:DMloss} cannot improve over SPIN-Diffusion. It is also worth noting that Theorem~\ref{thm:notstop} does not assert that every global minimum of $L_{\mathrm{DSM}}$ meets the convergence criterion (i.e., $\btheta_{k+1} = \btheta_{k}$), particularly due to the influence of the ``Pushing'' term in \eqref{eq:gradient}. The following theorem provides additional insight into the conditions under which Algorithm~\ref{alg:Improving} converges.

\begin{theorem}\label{thm:stop}
Under Assumption~\ref{assm:1}, if $p_{\btheta_k}(\cdot|\xb) = p_{\mathrm{data}}(\cdot|\xb)$, then $\btheta_k$ is the global minimum of~$\eqref{eq:newloss}$ for any $\lambda\geq 0$. 
\end{theorem}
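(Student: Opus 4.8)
The plan is to show that when $p_{\btheta_k}(\cdot|\xb) = p_{\mathrm{data}}(\cdot|\xb)$, the value of $L_{\method}^{\mathrm{approx}}(\btheta,\btheta_k)$ is minimized at $\btheta = \btheta_k$. The essential idea is that the ``Matching'' and ``Pushing'' expectations in the gradient \eqref{eq:gradient} are then taken over \emph{the same} distribution of pairs $(\xb_{t-1},\xb_t)$, so the two squared-norm terms inside $\ell$ cancel in expectation and the argument of $\ell$ concentrates around $0$ in a way that makes $\btheta_k$ optimal. Concretely, first I would substitute the hypothesis $p_{\btheta_k}(\xb_{t-1},\xb_t|\cbb) = p_{\mathrm{data}}(\xb_{t-1},\xb_t|\cbb)$ into the expectation defining \eqref{eq:newloss}: both the primed pair $(\xb'_{t-1},\xb'_t)$ and the unprimed pair $(\xb_{t-1},\xb_t)$ are now i.i.d.\ draws from the common law $p_{\mathrm{data}}(\xb_{t-1},\xb_t|\cbb)$ for each fixed $\cbb$ and $t$.

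Next I would lower bound $L_{\method}^{\mathrm{approx}}(\btheta,\btheta_k)$ using Jensen's inequality, pushing the expectation over the (now identically distributed) pairs inside the convex function $\ell$. Writing $g_{\btheta}(\cbb,\xb_{t-1},\xb_t,t) := \|\xb_{t-1} - \bmu_{\btheta}(\xb_t,\cbb,t)\|_2^2 - \|\xb_{t-1} - \bmu_{\btheta_k}(\xb_t,\cbb,t)\|_2^2$, the argument of $\ell$ in \eqref{eq:newloss} is exactly $-\beta_t\big(g_{\btheta}(\cbb,\xb_{t-1},\xb_t,t) - g_{\btheta}(\cbb,\xb'_{t-1},\xb'_t,t)\big)$. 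Taking the conditional expectation over the two independent copies and applying Jensen with the convex $\ell$ gives
\begin{align}
L_{\method}^{\mathrm{approx}}(\btheta,\btheta_k) \geq \EE_{\cbb,t}\Big[\ell\Big(-\beta_t\EE\big[g_{\btheta}(\cbb,\xb_{t-1},\xb_t,t)\big] + \beta_t\EE\big[g_{\btheta}(\cbb,\xb'_{t-1},\xb'_t,t)\big]\Big)\Big], \notag
\end{align}
and since the two inner expectations are over the same distribution they are equal, so the argument of $\ell$ is $0$ and the right-hand side equals $\EE_{\cbb,t}[\ell(0)]$. On the other hand, plugging $\btheta = \btheta_k$ directly into \eqref{eq:newloss} makes $g_{\btheta_k} \equiv 0$ pointwise, so $L_{\method}^{\mathrm{approx}}(\btheta_k,\btheta_k) = \EE_{\cbb,t}[\ell(0)]$ as well. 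Hence $L_{\method}^{\mathrm{approx}}(\btheta,\btheta_k) \geq L_{\method}^{\mathrm{approx}}(\btheta_k,\btheta_k)$ for every $\btheta$, which is exactly the claim that $\btheta_k$ is a global minimizer. The statement ``for any $\lambda \geq 0$'' follows because $\lambda$ only enters through the reparameterization $\sigma_t^2 = \lambda T/(2\beta_t)$, i.e.\ through $\beta_t$, and the argument above holds for every fixed positive $\beta_t$; the $\lambda = 0$ edge case degenerates but the inequality is preserved.

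The main obstacle I anticipate is making the Jensen step fully rigorous given the structure of the expectation in \eqref{eq:newloss}: I must be careful that the conditioning is done correctly so that, after fixing $\cbb$ and $t$, the pair $(\xb_{t-1},\xb_t)$ and the pair $(\xb'_{t-1},\xb'_t)$ are genuinely independent and identically distributed — which is where the hypothesis $p_{\btheta_k} = p_{\mathrm{data}}$ is used in an essential way — and that the inner expectations I am equating are indeed the conditional expectations of the \emph{same} integrand against the \emph{same} law. A secondary point worth a sentence is that this argument shows $\btheta_k$ achieves the value $\EE[\ell(0)]$, and the Jensen bound shows no $\btheta$ can go below it, so we get a genuine global minimum rather than merely a stationary point; no smoothness of $\ell$ or $\bmu_\btheta$ is needed beyond Assumption~\ref{assm:1}.
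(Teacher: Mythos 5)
Your proof is correct and follows essentially the same strategy as the paper's: use the hypothesis $p_{\btheta_k}=p_{\mathrm{data}}$ to make the primed and unprimed pairs i.i.d., invoke Jensen's inequality with the convexity of $\ell$ to get the lower bound $\ell(0)$, and observe that $\btheta_k$ achieves this value. The only cosmetic difference is that the paper first symmetrizes the objective by swapping $(\xb_{t-1},\xb_t)\leftrightarrow(\xb'_{t-1},\xb'_t)$ and then applies the two-point convexity inequality $\tfrac12(\ell(a)+\ell(-a))\geq\ell(0)$, whereas you apply conditional Jensen directly over the pair distribution and then use equality of the two inner means; both routes are equally rigorous and yield the identical bound.
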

Theorem~\ref{thm:stop} shows that Algorithm~\ref{alg:Improving} converges when $p_{\btheta}(\cdot|\xb) = p_{\mathrm{data}}(\cdot|\xb)$, 
indicating the efficacy of SPIN-Diffusion in aligning with the target data distribution. In addition, while Theorems~\ref{thm:notstop} and~\ref{thm:stop} are directly applicable to \eqref{eq:newloss}, the analogous conclusion can be drawn for \eqref{eq:loss_score} as well (see Appendix~\ref{sec:proof} for a detailed discussion).

\section{Experiments}~\label{sec: exp}

In this section, we conduct extensive experiments to demonstrate the effectiveness of SPIN-Diffusion. Our results show that SPIN-Diffusion outperforms other baseline fine-tuning methods including SFT and Diffusion-DPO.

\subsection{Experiment Setup}\label{subsec:exp_setup}

\noindent\textbf{Models, Datasets and Baselines.} 
We use the stable diffusion v1.5 (SD-1.5) \citep{rombach2022high} as our base model. While adopting the original network structure, we use its Huggingface pretrained version\footnote{https://huggingface.co/runwayml/stable-diffusion-v1-5}, which is trained on LAION-5B \citep{schuhmann2022laion} dataset, a text-image pair dataset containing approximately 5.85 billion CLIP-filtered image-text pairs. We use the Pick-a-Pic dataset \citep{kirstain2023pick} for fine-tuning. Pick-a-Pic is a dataset with pairs of images generated by Dreamlike\footnote{https://dreamlike.art/} 
(a fine-tuned version of SD-1.5) and SDXL-beta \citep{podell2023sdxl}, where each pair corresponds to a human preference label. 
We also train SD-1.5 with SFT and Diffusion-DPO~\citep{wallace2023diffusion} 
as the baselines. For SFT, we train the model to fit the winner images in the Pick-a-Pic~\citep{kirstain2023pick} trainset.
In addition to the Diffusion-DPO checkpoint provided by \citet{wallace2023diffusion}\footnote{https://huggingface.co/mhdang/dpo-sd1.5-text2image-v1} (denoted by Diffusion-DPO), we also fine-tune an SD-1.5 using Diffusion-DPO and denote it by ``Diffusion-DPO (ours)''.

\noindent\textbf{Evaluation.} 
We use the Pick-a-Pic test set, PartiPrompts \citep{yu2022scaling} and HPSv2 \citep{wu2023human} as our evaluation benchmarks.  All of these datasets are collections of prompts and their size is summarized in Table~\ref{tab:dataset}. Due to space limit, we defer the detailed introduction and results of PartiPrompts and HPSv2 to Appendix~\ref{subsec:ablation}.
Our evaluation rubric contains two dimensions, human preference alignment and visual appeal. For visual appeal assessment, we follow \citet{wallace2023diffusion, lee2024parrot} and use Aesthetic score. For human-preference alignment, we employ reward models including PickScore \citep{kirstain2023pick}, ImageReward \citep{xu2023imagereward} and HPS~\citep{wu2023human}. 
All these reward models are trained according to the Bradley-Terry-Luce \citep{bradley1952rank} model on different human-labeled preference datasets. For each prompt, we generate $5$ images and choose the image with highest average score over those four metrics (best out of $5$). We report the average of HPS, PickScore, ImageReward and Aesthetic scores over all the prompts.
To investigate how the scores align with human preference, we further compare the accuracy of these reward models on a small portion of the Pick-a-Pic training set. It is worth noticing that PickScore is most aligned with human preference according to the experiments conducted by \citet{kirstain2023pick}. The detailed results are shown in Table~\ref{tab:score-align}.

\begin{table}[t!]
    \centering
    \caption{The size of benchmark datasets in our evaluation}
    \begin{tabular}{c | c c c c c c}
    \toprule
        Benchmarks & Pick-a-Pic & PartiPrompts & HPSv2  \\
        \midrule
        \# Prompts & 500 & 1630 & 3200  \\
    \bottomrule
    \end{tabular}%
    \label{tab:dataset}
\end{table}

 \begin{table}[h!]
    \centering
    \caption{The winning rate of the winner image against the loser image in a sample (i.e., 500 text prompts) of the Pick-a-Pic training set in terms of the four metrics. }
    \label{tab:score-align}
    \begin{tabular}{c | c c c c}
    \toprule
        Metrics & PickScore & HPS & Aesthetic & ImageReward \\
        \midrule
        Winning Rate & 74.07 & 61.54 & 51.89 & 62.00 \\
    \bottomrule
    \end{tabular}%
\end{table}


\begin{figure*}[ht!]
\centering   
\subfigure[Aesthetic]{\label{fig:main_aes}\includegraphics[width=0.45\textwidth]{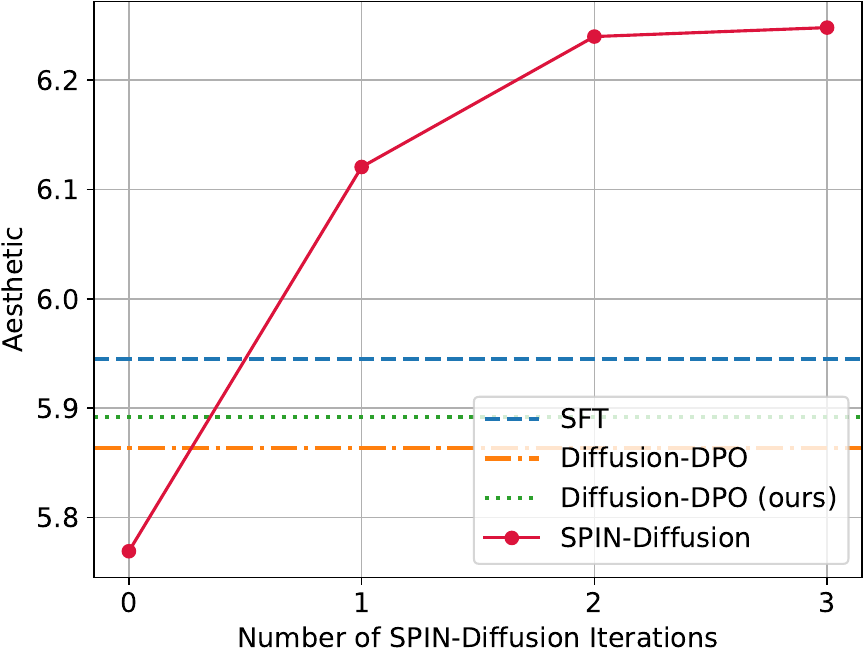}}
\subfigure[PickScore]{\label{fig:main_pic}\includegraphics[width=0.45\textwidth]{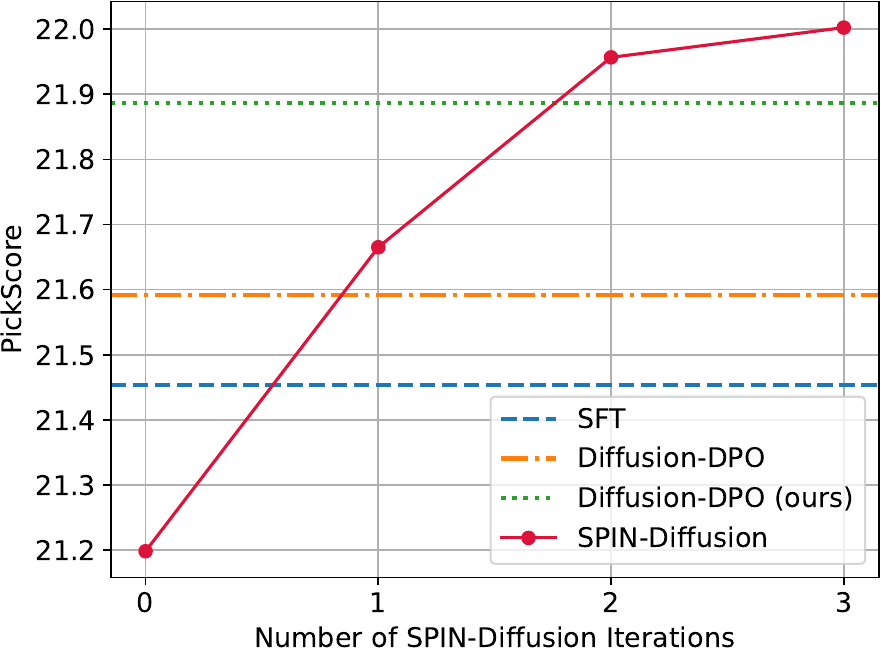}}
\subfigure[HPS]{\label{fig:main_hps}\includegraphics[width=0.45\textwidth]{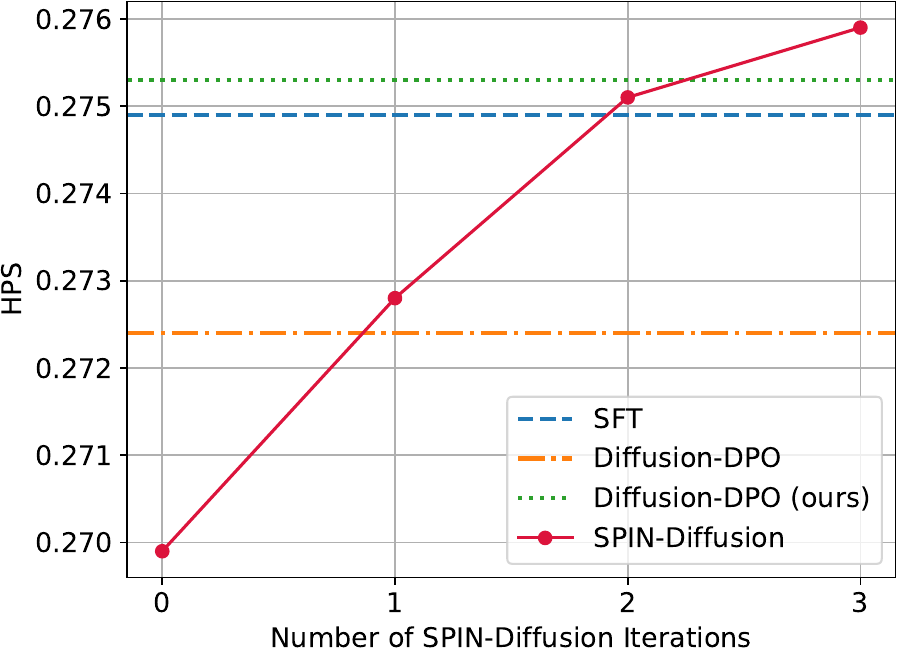}}
\subfigure[ImageReward]{\label{fig:main_hps}\includegraphics[width=0.45\textwidth]{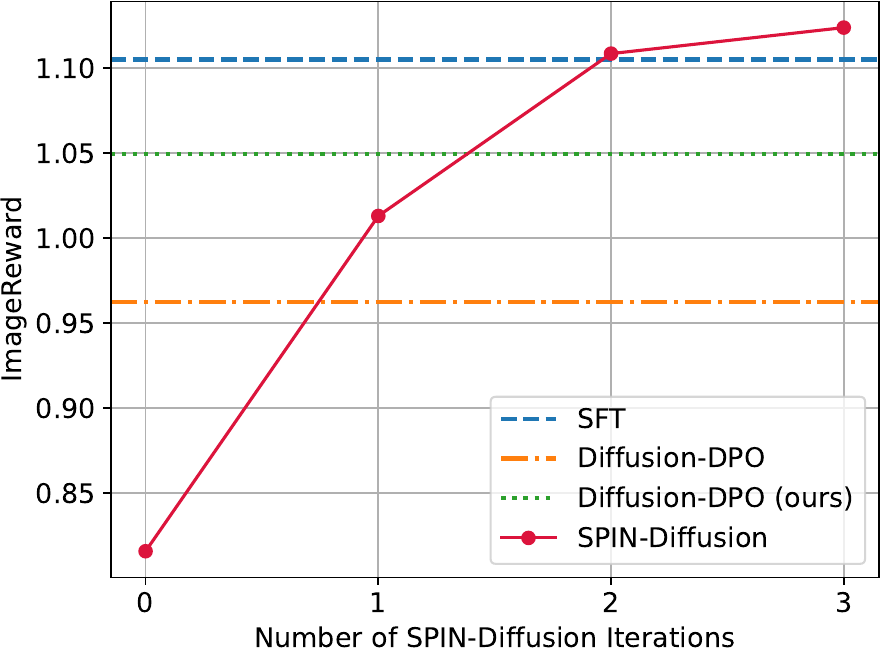}}
\caption{Comparison between SPIN-Diffusion at different iterations with SD-1.5, SFT and Diffusion-DPO. SPIN-Diffusion outperforms SFT at iteration 1, and outperforms all the baselines after iteration~2.}
\label{fig:fig_main}
\end{figure*}

\subsection{Main Results}

In this subsection, we provide empirical evidence demonstrating the superiority of our SPIN-Diffusion model over previous fine-tuning baselines based on the network structure of SD1.5.

\noindent\textbf{Comparison in Terms of Average Score.} The results are presented in Figure~\ref{fig:fig_main} and Table~\ref{tab:results_main_mean}. While all fine-tuning algorithms yield improvements over the SD1.5 baseline, at iteration 1, our SPIN-Diffusion not only exceeds the original DPO checkpoint but also surpasses SFT in both Aesthetic score and PickScore.
\begin{table}[t!]
    \caption{The results on the Pick-a-Pic test set. We report the mean of PickScore, HPS, ImageReward and Aesthetic over the whole test set. We also report the average score over the three evaluation metrics. SPIN-Diffusion outperforms all the baselines in terms of four metrics. For this and following tables, we use blue background to indicate our method, \textbf{bold} numbers to denote the best and \underline{underlined} for the second best.}
    \centering
    \label{tab:results_main_mean}
    \begin{tabular}{l |c c c c c c c c}
    \toprule
        Model & HPS $\uparrow$ & Aesthetic $\uparrow$ & ImageReward $\uparrow$ & PickScore $\uparrow$ & Average $\uparrow$ \\
        \midrule
    SD-1.5 & 0.2699 & 5.7691 & 0.8159 & 21.1983 & 7.0133   \\
    SFT (ours) & 0.2749 & 5.9451 & 1.1051 & 21.4542 & 7.1948 \\
    Diffusion-DPO & 0.2724 & 5.8635 & 0.9625 & 21.5919 & 7.1726 \\
    Diffusion-DPO (ours) & \underline{0.2753} & 5.8918 & 1.0495 & 21.8866 & 7.2758   \\
    \midrule
    \rowcolor{LightCyan} SPIN-Diffusion-Iter1 & 0.2728 & 6.1206 & 1.0131 & 21.6651 & 7.2679   \\
    \rowcolor{LightCyan} SPIN-Diffusion-Iter2 & 0.2751 & \underline{6.2399} & \underline{1.1086} & \underline{21.9567} & \underline{7.3951} \\
    \rowcolor{LightCyan} SPIN-Diffusion-Iter3 & \textbf{0.2759} & \textbf{6.2481} & \textbf{1.1239} & \textbf{22.0024} & \textbf{7.4126} \\
    \bottomrule
    \end{tabular}%
\end{table}
\begin{figure*}[ht!]
    \centering
\includegraphics[width=0.75\textwidth]{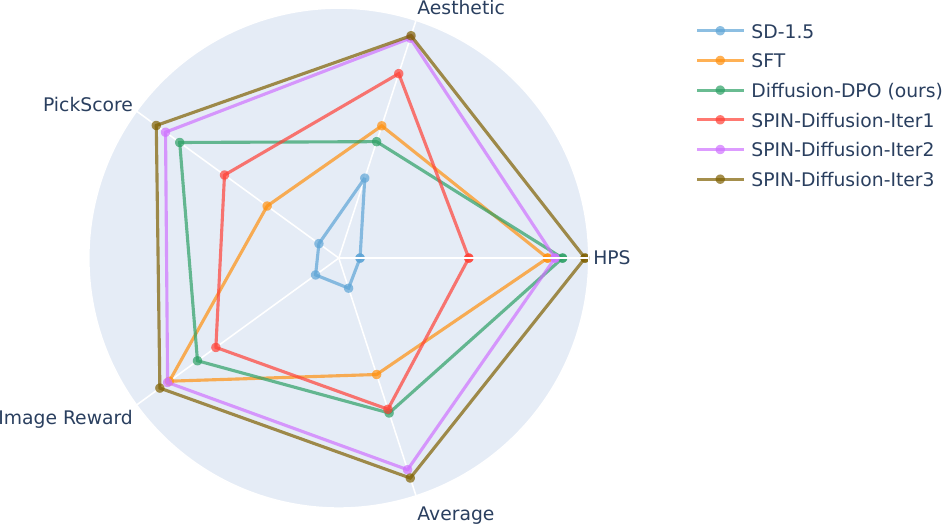}
    \caption{The main result is presented in radar chart. The scores are adjusted to be shown on the same scale. Compared with the baselines, $\method$ achieves higher scores in all the four metrics and the average score by a large margin.}
\label{fig:radar}
\end{figure*}
At iteration 2, the superiority of our model becomes even more pronounced, particularly in terms of Aesthetic score, where it consistently outperforms other fine-tuning methods, indicating a dominant performance in visual quality.
Furthermore, at iteration 3, our model's HPSv2 score surpasses all competing models, highlighting the effectiveness and robustness of the SPIN-Diffusion approach. 
Specifically, on the Pick-a-Pic dataset, while SFT achieves a PickScore of $21.45$, and Diffusion-DPO has a slightly higher score of $21.45$, SPIN-Diffusion achieves $22.00$ at iteration $3$, showing a total improvement of $0.80$ over the original SD1.5 checkpoint. Furthermore, SPIN-Diffusion demonstrates exceptional performance in terms of Aesthetic score, achieving $6.25$ at iteration $3$, which significantly surpasses $5.86$ achieved by Diffusion-DPO and $5.77$ by SD1.5.
The results are also summarized as a radar chart in Figure~\ref{fig:radar}.



\begin{figure*}[ht!]
\centering   
\subfigure[Compared to SD-1.5]{\label{fig:win_sd}\includegraphics[width=0.45\textwidth]{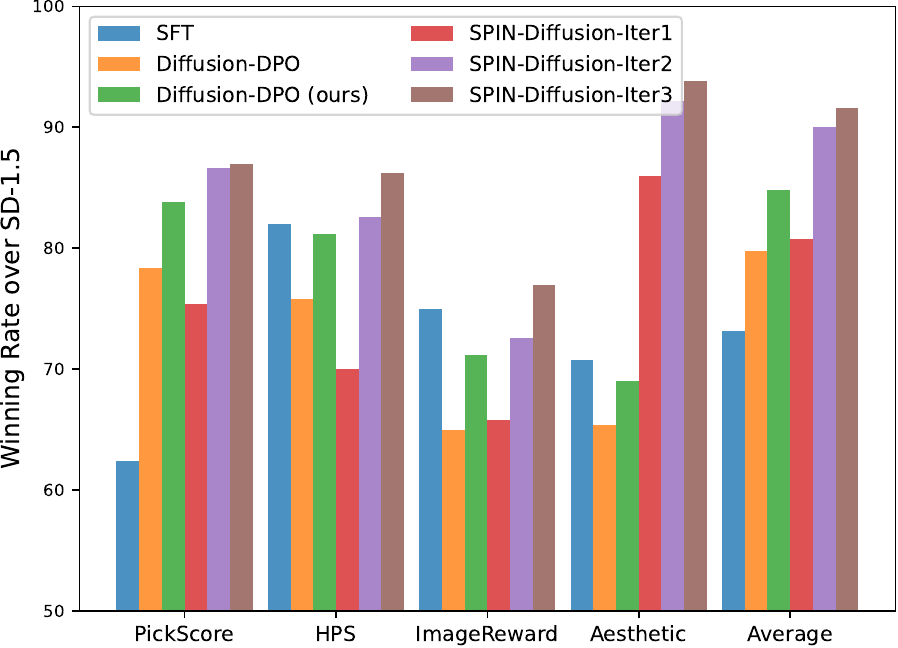}}
\subfigure[Compared to Diffusion-DPO (ours)]{\label{fig:win_dpo}\includegraphics[width=0.45\textwidth]{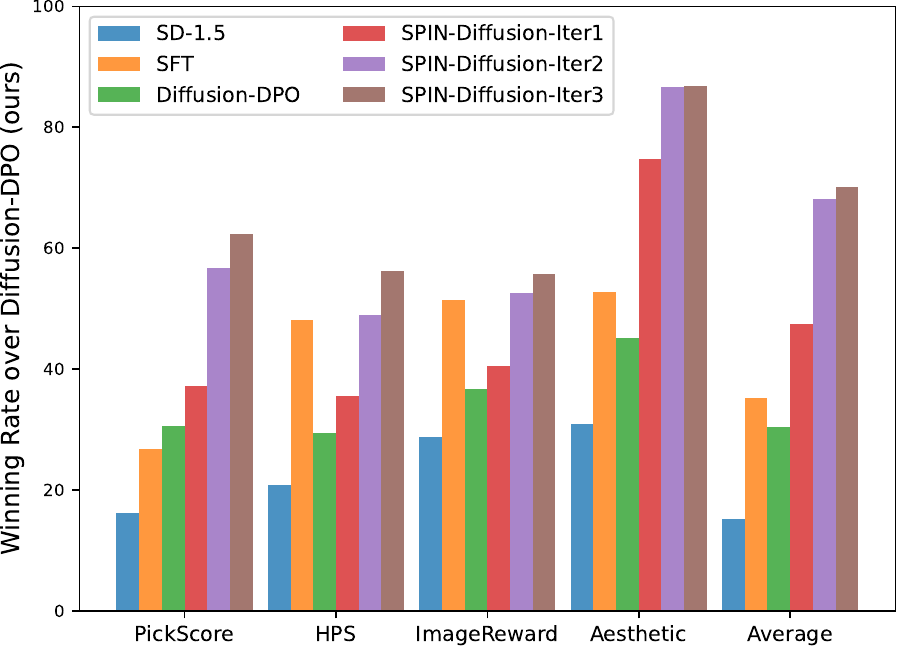}}
\caption{Left: winning rate in percentage of SFT, Diffusion-DPO, Diffusion-DPO (ours) and SPIN-Diffusion over SD1.5 checkpoint. Right:  winning rate in percentage of SFT, Diffusion-DPO, Diffusion-DPO (ours) and SPIN-Diffusion over SD1.5 checkpoint. SPIN-Diffusion shows a much higher winning rate than SFT and Diffusion-DPO tuned models.}
\label{fig:bar_win_rate}
\end{figure*}
\begin{figure*}[ht!]
    \centering
    \includegraphics[width=0.99\textwidth]{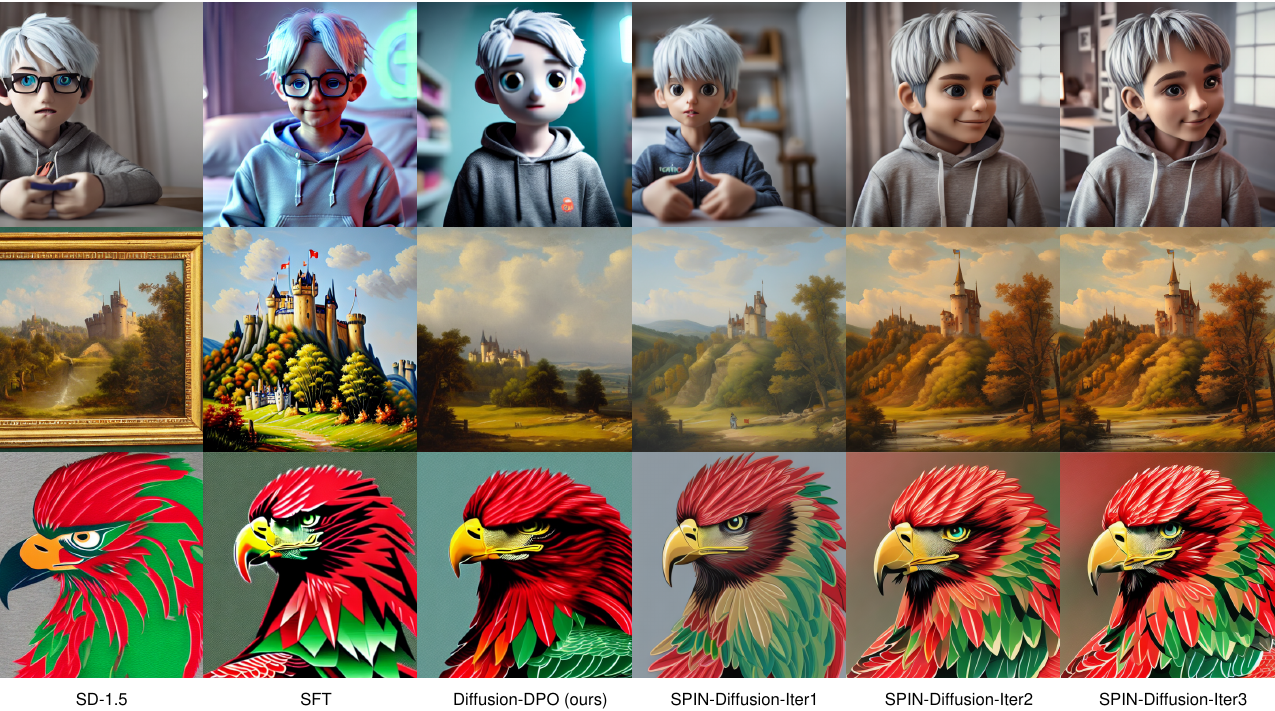}
    \caption{We show the images generated by different models. The prompts are ``\textit{a very cute boy, looking at audience, silver hair, in his room, wearing hoodie, at daytime, ai language model, 3d art, c4d, blender, pop mart, blind box, clay material, pixar trend, animation lighting, depth of field, ultra detailed}'', ``\textit{painting of a castle in the distance}'' and ``\textit{red and green eagle}''. The models are: SD-1.5, SFT, Diffusion-DPO (ours), SPIN-Diffusion-Iter1, SPIN-Diffusion-Iter2, SPIN-Diffusion-Iter3 from left to right. SPIN-Diffusion demonstrates a notable improvement in image quality.}
\label{fig:fig_qualitative}
\end{figure*}
\noindent\textbf{Comparison in Terms of Winning Rate. } We further validate our claim by a comparative analysis of the winning rate for our trained model. The winning rate is defined as the proportion of prompts for which a model's generated images exceed the quality of those produced by another model.
This experiment is conducted on the Pick-a-Pic test set.
We show both the winning rate over SD-1.5, as well as the winning rate over Diffusion-DPO (ours) in Figure~\ref{fig:bar_win_rate}.
The complete results are detailed in Tables~\ref{tab:results_winrate_sd} and~\ref{tab:results_winrate_dpo} in Appendix~\ref{app:add_resu}. We observe that throughout fine-tuning, our SPIN-Diffusion tremendously beats the baselines.
When competing with SD-1.5, SPIN-Diffusion achieves an impressive winning rate of 90.0\% at iteration 2, which further increases to 91.6\% at iteration 3. This winning rate surpasses 73.2\% achieved by SFT and 84.8\% achieved by Diffusion-DPO (ours).
When competing with Diffusion-DPO (ours), at iteration 3, SPIN-Diffusion achieves a winning rate of 56.2\% on HPS, 86.8\% on Aesthetic, 62.4\% on PickScore, 55.8\% on Image Reward, and has an overall winning rate of 70.2\%.


\subsection{Qualitative Analysis}
We illustrate the qualitative performance of our model on three prompts coming from the Pick-a-Pic test dataset. We prompt SD-1.5, SFT, Diffusion-DPO (ours), and SPIN-Diffusion at iteration 1 to 3 and present the generated images in Figure~\ref{fig:fig_qualitative}. Compared to the baseline methods, SPIN-Diffusion demonstrates a notable improvement in image quality, even more apparent than the improvements in scores.
This is especially evident in aspects such as aligning, shading, visual appeal, and the intricacy of details within each image. This qualitative assessment underscores the effectiveness of SPIN-Diffusion in producing images that are not only contextually accurate but also visually superior to those generated by other existing models.

\begin{figure*}[h!]
\centering   
\subfigure[Aesthetic]{\label{fig:curve_aes}\includegraphics[width=0.45\textwidth]{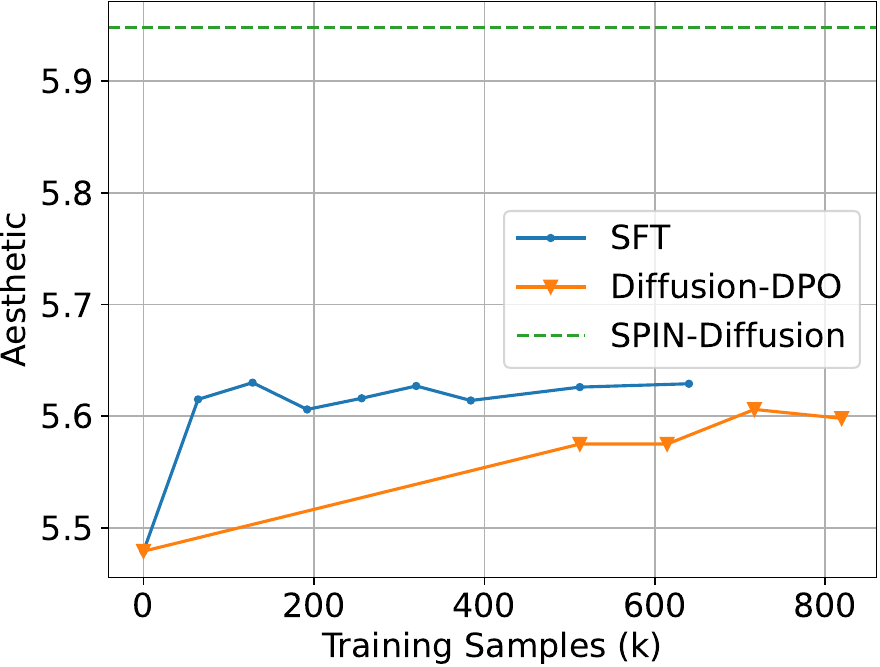}}
\subfigure[PickScore]{\label{fig:curve_pic}\includegraphics[width=0.45\textwidth]{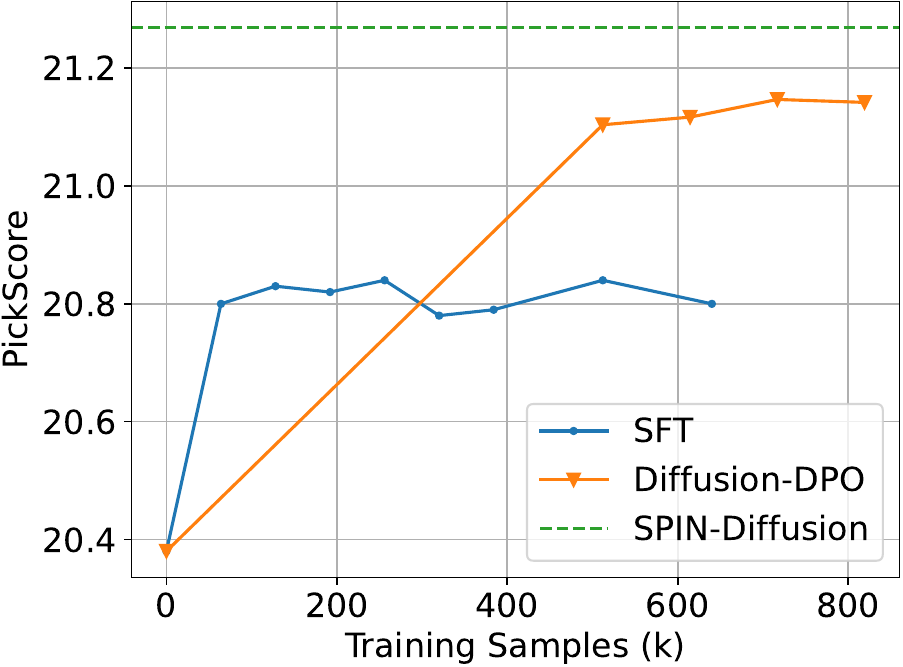}}
\subfigure[HPS]{\label{fig:curve_hps}\includegraphics[width=0.45\textwidth]{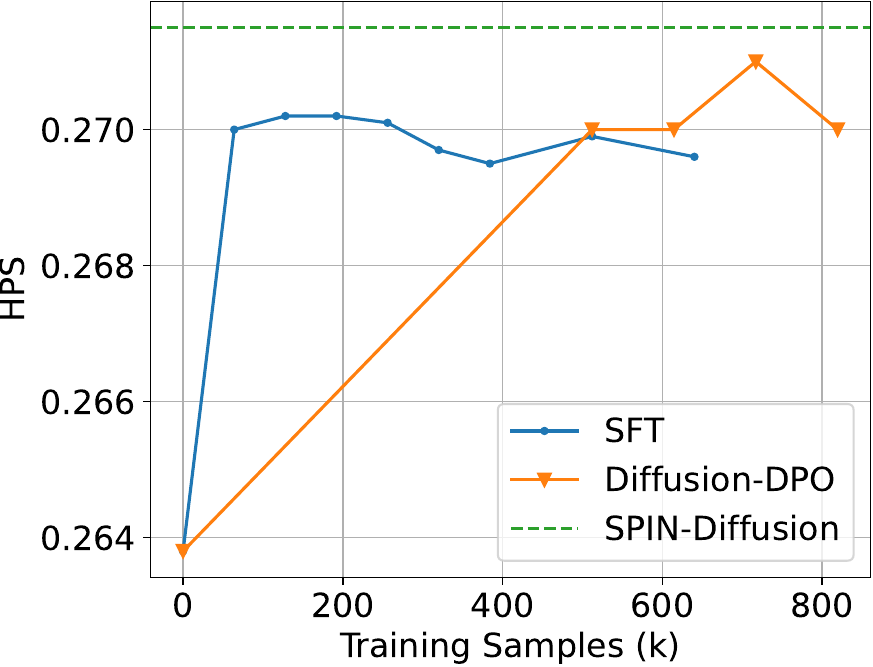}}
\subfigure[Average Score]{\label{fig:curve_avg}\includegraphics[width=0.45\textwidth]{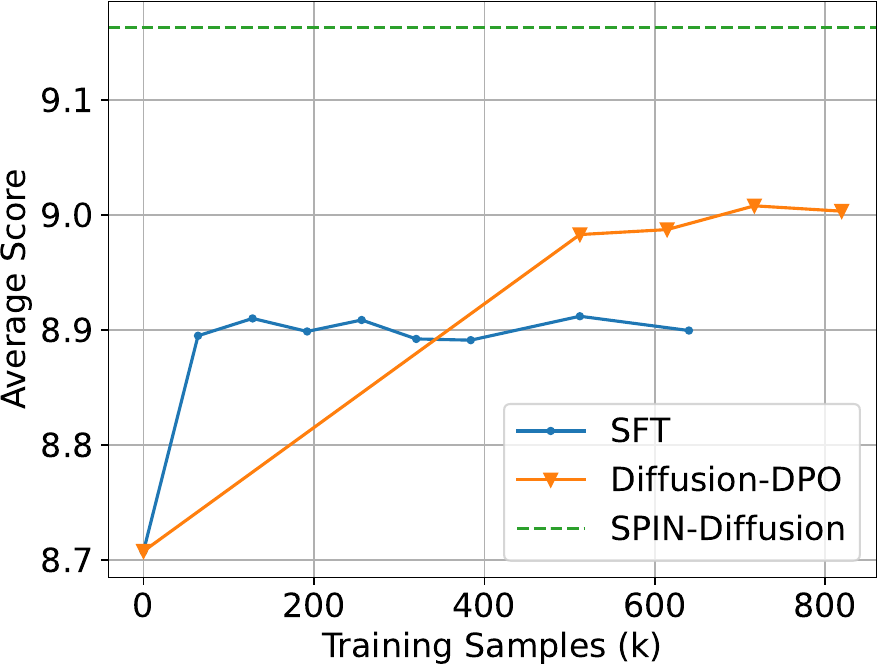}}
\caption{The evaluation results on the Pick-a-Pic validation set of SFT, Diffusion-DPO and SPIN-Diffusion. The x-axis is the number of training data. SFT reaches its limit quickly, while Diffusion-DPO and SPIN-Diffusion continue to improve after training with over 800k data.}
\label{fig:fig_curve}
\end{figure*}

\subsection{Training Dynamics of SFT and DPO} We first study the training dynamic of SPIN-Diffusion in comparison with SFT and Diffusion-DPO,  and we plot the results in Figure~\ref{fig:fig_curve}. We observe that after training with about 50k data, the performance of SFT stop improving and maintains at about 20.8 in PickScore, 0.270 in HPS, 5.6 in Aesthetic and 8.9 in average score. These results is significantly inferior to those achieved by SPIN-Diffusion, which achieves 21.2 in PickScore, 0.272 in HPS, 5.9 in Aesthetic and 9.1 in average score. Compared to Diffusion-DPO, SPIN-Diffusion achieves a superior performance without the loser image.  These results demonstrate that self-play fine-tuning plays a key role in SPIN-Diffusion's performance.


\section{Conclusion}

This paper presents SPIN-Diffusion, an innovative fine-tuning approach tailored for diffusion models, particularly effective in scenarios where only a single image is available per text prompt. By employing a self-play mechanism, SPIN-Diffusion iteratively refines the model's performance, converging towards the target data distribution. Theoretical evidence underpins the superiority of SPIN-Diffusion, demonstrating that traditional supervised fine-tuning cannot surpass its stationary point, achievable at the target data distribution. Empirical evaluations highlight SPIN-Diffusion's remarkable success in text-to-image generation tasks, surpassing the state-of-the-art fine-tuning methods even without the need for additional data. This underscores SPIN-Diffusion's potential to revolutionize the practice of diffusion model fine-tuning, leveraging solely demonstration data to achieve unprecedented performance levels.

\appendix

\section{Additional Details for Experiments}
\subsection{Hyperparameters}
We train the SPIN-Diffusion on 8 NVIDIA A100 GPUs with 80G memory. In training the SPIN-Diffusion, we use the AdamW optimizer with a weight decay factor of $1e-2$. The images are processed at a $512 \times 512$ resolution. The batch size is set to 8 locally, alongside a gradient accumulation of 32. For the learning rate, we use a schedule starting with $200$ warm-up steps, followed by linear decay. We set the learning rate at $2.0e-5$ for the initial two iterations, reducing it to $5.0e-8$ for the third iteration. The coefficient $\beta_t$ is chosen as $2000$ for the first iteration, increasing to $5000$ for the subsequent second and third iterations. Training steps are $50$ for the first iteration, $500$ for the second, and $200$ for the third.
In training the DPO model, we employ the same AdamW optimizer and maintain a batch size of 8 and a gradient accumulation of 32. The learning rate is set to $2.0e-5$, and $\beta_t$ is set to $2000$. The total number of training steps for DPO is $350$. In SFT training, we use 4 NVIDIA A6000 GPUs. We use the AdamW optimizer with a weight decay of $0.01$. The local batch size is set to 32 and the global batch size is set to 512. Our learning rate is 1e-5, with linear warmup for 500 steps with no learning rate decay. We save checkpoints every 500 steps and evaluate the checkpoints on Pick-a-Pic validation. We select the best checkpoint, trained after 2000 steps as our SFT checkpoint.

During generation, we use a guidance scale of $7.5$, and fixed the random seed as $5775709$.

\subsection{Additional Results}\label{app:add_resu}

We present the median scores of baselines and SPIN-Diffusion on Pick-a-Pic testset in Table~\ref{tab:results_median}. The results are consistent to the results in Table~\ref{tab:results_main_mean}. We present the detailed winning rate of baselines and SPIN-Diffusion over SD-1.5 in Table~\ref{tab:results_winrate_sd} and the winning rate over Diffusion-DPO in Table~\ref{tab:results_winrate_dpo}. 

\begin{table*}[ht!]
    \caption{The results of median scores on Pick-a-Pic test set. We report the median of PickScore, HPSv2, ImageReward and Aesthetic over the whole test set. We also report the average score over the four evaluation metric. SPIN-Diffusion outperforms all the baselines regarding HPS, Aesthetic, PickScore and the average score, which agrees with the results of mean scores.}
    \label{tab:results_median}
    \centering
    \begin{tabular}{l |c c c c c c c c}
    \toprule
        Model & HPS $\uparrow$ & Aesthetic $\uparrow$ & ImageReward $\uparrow$ & PickScore $\uparrow$ & Average $\uparrow$ \\
        \midrule
    SD-1.5 & 0.2705 & 5.7726 & 0.9184 & 21.1813 & 7.0357 \\
    SFT (ours) & 0.2750  & 5.9331 & \textbf{1.3161} & 21.4159 & 7.2350  \\
    Diffusion-DPO & 0.2729 & 5.8837 & 1.1361 & 21.6064 & 7.2248 \\
    Diffusion-DPO (ours) & \underline{0.2756} & 5.8895 & 1.2219 & 21.8995 & 7.3216 \\
    \midrule
    \rowcolor{LightCyan} SPIN-Diffusion-Iter1 & 0.2739 & 6.1297 & 1.1366 & 21.6464 & 7.2967 \\
    \rowcolor{LightCyan} SPIN-Diffusion-Iter2 & 0.2751 & \underline{6.2385} & 1.3059 & \underline{22.0101} & \underline{7.4574} \\
    \rowcolor{LightCyan} SPIN-Diffusion-Iter3 & \textbf{0.2761} & \textbf{6.2769} & \underline{1.3073} & \textbf{22.0703} & \textbf{7.4827} \\
    \bottomrule
    \end{tabular}%
\end{table*}

\begin{table*}[ht!]
    \caption{The winning rate over SD-1.5 Pick-a-Pic testset. SPIN-Diffusion shows the highest winning rate over SD-1.5 among all the baselines.}
    \label{tab:results_winrate_sd}
    \centering
    \begin{tabular}{l |c c c c c c c c}
    \toprule
        Model & PickScore $\uparrow$ & HPS $\uparrow$ & ImageReward $\uparrow$ & Aesthetic $\uparrow$ & Average $\uparrow$ \\
        \midrule
    SFT (ours) & 62.4 & 82.0   & \underline{75.0}   & 70.8 & 73.2 \\
    Diffusion-DPO & 78.4 & 75.8 & 65.0   & 65.4 & 79.8 \\
    Diffusion-DPO (ours) & 83.8 & 81.2 & 71.2 & 69.0   & 84.8 \\
    \midrule
    \rowcolor{LightCyan} SPIN-Diffusion-Iter1 & 75.4 & 70.0   & 65.8 & 86.0   & 80.8 \\
    \rowcolor{LightCyan} SPIN-Diffusion-Iter2 & \underline{86.6} & \underline{82.6} & 72.6 & \underline{92.2} & \underline{90.0}   \\
    \rowcolor{LightCyan} SPIN-Diffusion-Iter3 & \textbf{87.0}   & \textbf{86.2} & \textbf{77.0}   & \textbf{93.8} & \textbf{91.6} \\
    \bottomrule
    \end{tabular}%
\end{table*}

\begin{table*}[ht!]
    \caption{The winning rate over Diffusion DPO (ours) on Pick-a-Pic testset. SPIN-Diffusion shows the highest winning rate over Diffusion DPO (ours) among all the baselines.}
    \label{tab:results_winrate_dpo}
    \centering
    \begin{tabular}{l |c c c c c c c c}
    \toprule
        Model & PickScore $\uparrow$ & HPS $\uparrow$ & ImageReward $\uparrow$ & Aesthetic $\uparrow$ & Average $\uparrow$ \\
        \midrule
    SD-1.5 & 16.2 & 20.8 & 28.8 & 31.0   & 15.2 \\
    SFT (ours) & 26.8 & 48.2 & 51.4 & 52.8 & 35.2 \\
    Diffusion-DPO & 30.6 & 29.4 & 36.8 & 45.2 & 30.4 \\
    \midrule
    \rowcolor{LightCyan} SPIN-Diffusion-Iter1 & 37.2 & 35.6 & 40.6 & 74.8 & 47.4 \\
    \rowcolor{LightCyan} SPIN-Diffusion-Iter2 & \underline{56.8} & \underline{49.0}   & \underline{52.6} & \underline{86.6} & \underline{68.2} \\
    \rowcolor{LightCyan} SPIN-Diffusion-Iter3 & \textbf{62.}4 & \textbf{56.2} & \textbf{55.8} & \textbf{86.8} & \textbf{70.2} \\
    \bottomrule
    \end{tabular}%
\end{table*}


\subsection{Additional Ablation Study}\label{subsec:ablation}

We conduct ablation study to investigate several aspects in the performance of SPIN-Diffusion.

\paragraph{Continual Training for More Epochs. }
We further study the training behavior of SPIN-Diffusion by continual training within iteration 1.
Both iteration 1 and iteration 2 commence training from the same checkpoint. However, for subsequent epochs in iteration 1, images generated by SD-1.5 are used, with SD-1.5 also serving as the opponent player. In contrast, during iteration 2, both the generated images and the opponent player originate from the iteration 1 checkpoint.
The results shown in Figure~\ref{fig:fig_limit} are reported on the $500$ prompts validation set of Pick-a-Pic. We observe that in terms of PickScore, HPS, and average score, continual training on iteration 1 even results in a performance decay. Even in terms of Aesthetic score, continual training cannot guarantee a consistent improvement. Compared to training for more epochs in iteration 1, iteration 2 has a much more ideal performance. These results show the key role in updating the opponent.

\begin{figure*}[ht!]
\centering   
\subfigure[Aesthetic]{\label{fig:limit_aes}\includegraphics[width=0.45\textwidth]{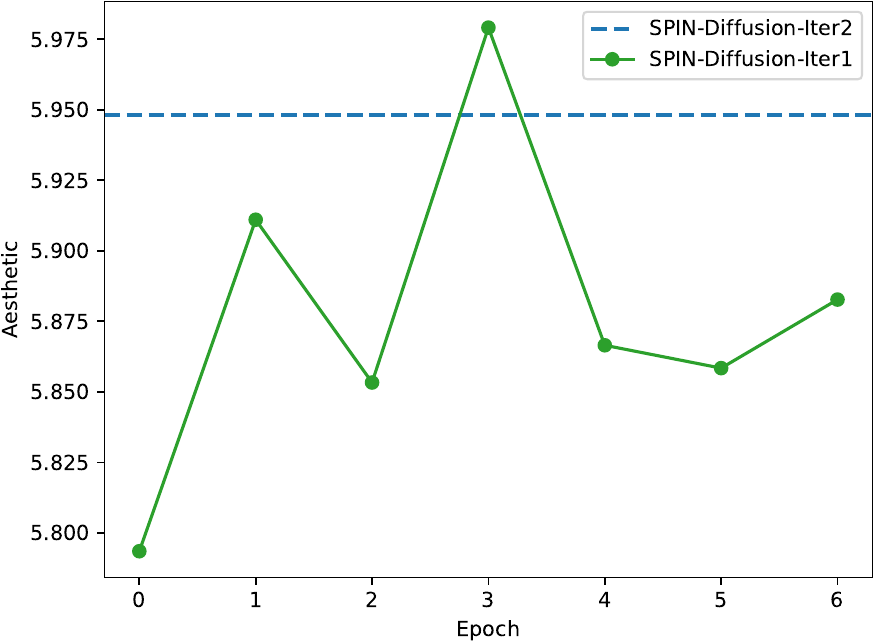}}
\subfigure[PickScore]{\label{fig:limit_pic}\includegraphics[width=0.45\textwidth]{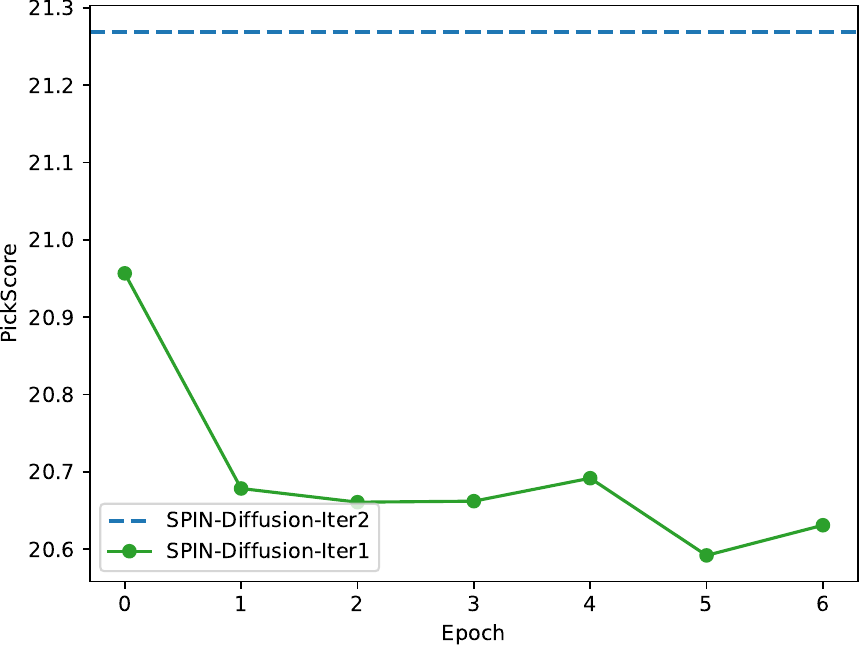}}
\subfigure[HPS]{\label{fig:limit_hps}\includegraphics[width=0.45\textwidth]{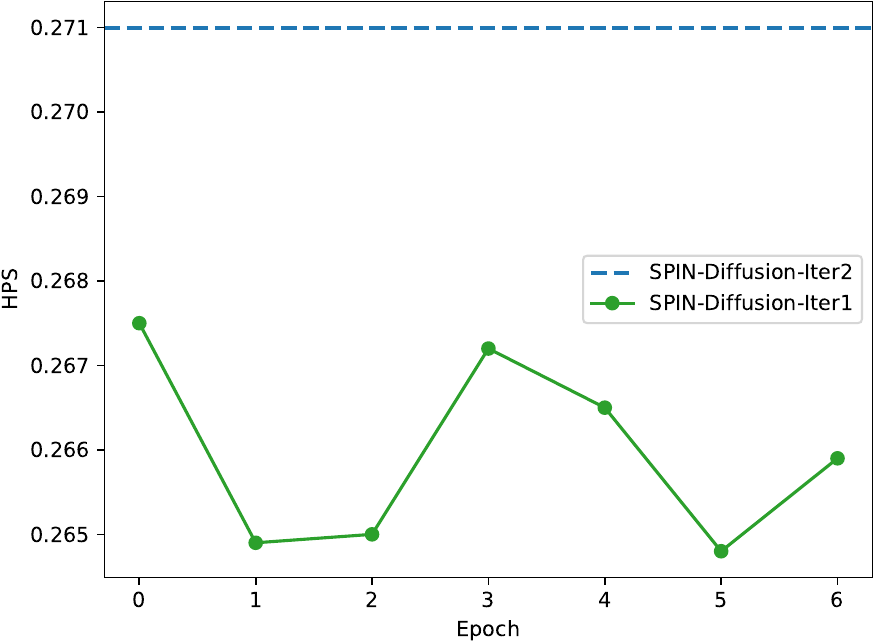}}
\subfigure[Average Score]{\label{fig:limit_avg}\includegraphics[width=0.45\textwidth]{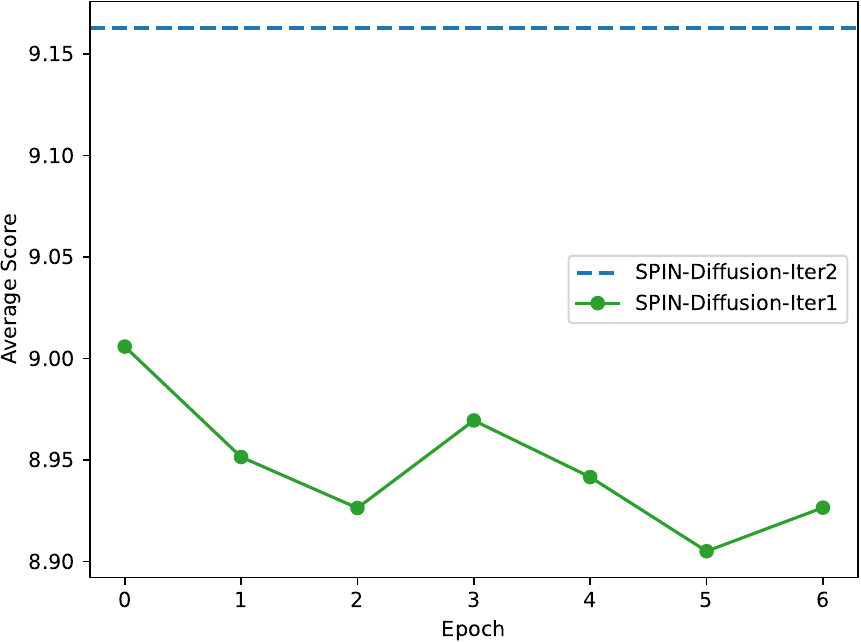}}
\caption{The evaluation results on Pick-a-Pic validation set of continual training within SPIN-Diffusion iteration 1, and SPIN-Diffusion iteration 2. The x-axis is the number of epochs. Consecutive epochs in iteration 1 reach their limit quickly while switching to iteration 2 boosts the performance.}
\label{fig:fig_limit}
\end{figure*}

\paragraph{Evaluation on Other Benchmarks}


\begin{table*}[ht!]
    \caption{The results of mean scores on PartiPrompts. We report the mean and median of PickScore, HPS, ImageReward and Aesthetic score over the whole dataset. We also report the average score over the four evaluation metrics. SPIN-Diffusion outperforms all the baselines in terms of four metrics.}
    \label{tab:parti_mean}
    \centering
    \begin{tabular}{l |c c c c c c c c}
    \toprule
        Model & HPS $\uparrow$ & Aesthetic $\uparrow$ & ImageReward $\uparrow$ & PickScore $\uparrow$ & Average $\uparrow$ \\
        \midrule
    SD-1.5 & 0.2769 & 5.6721 & 0.9196 & 21.8926 & 7.1903 \\
    SFT (ours) & 0.2814 & 5.8568 & 1.1559 & 21.9719 & 7.3165 \\
    Diffusion-DPO & 0.2815 & 5.7758 & 1.1495 & 22.2723 & 7.3698 \\
    \midrule
    \rowcolor{LightCyan} SPIN-Diffusion-Iter1 & 0.2783 & 5.9073 & 0.9952 & 22.1221 & 7.3257 \\
    \rowcolor{LightCyan} SPIN-Diffusion-Iter2 & 0.2804 & 6.0533 & 1.0845 & 22.3122 & 7.4326 \\
    \bottomrule
    \end{tabular}%
\end{table*}

\begin{table*}[ht!]
    \caption{The results of median scores on PartiPrompts. We report the mean and median of PickScore, HPS, ImageReward and Aesthetic score over the whole dataset. We also report the average score over the four evaluation metrics. SPIN-Diffusion outperforms all the baselines in terms of four metrics.}
    \label{tab:parti_median}
    \centering
    \begin{tabular}{l |c c c c c c c c}
    \toprule
        Model & HPS $\uparrow$ & Aesthetic $\uparrow$ & ImageReward $\uparrow$ & PickScore $\uparrow$ & Average $\uparrow$ \\
        \midrule
    SD-1.5 & 0.2781 & 5.6823 & 1.1247 & 21.9339 & 7.2548 \\
    SFT (ours) & 0.2781 & 5.6823 & 1.1247 & 21.9339 & 7.2548 \\
    Diffusion-DPO & \underline{0.2822} & 5.7820  & \textbf{1.3823} & \underline{22.3251} & 7.4429 \\
    \midrule
    \rowcolor{LightCyan} SPIN-Diffusion-Iter1 & 0.2793 & 5.8926 & 1.1906 & 22.1632 & 7.3814 \\
    \rowcolor{LightCyan} SPIN-Diffusion-Iter2 & 0.2810  & \underline{6.0400}   & 1.2857 & 22.2998 & \underline{7.4766} \\
    \rowcolor{LightCyan} SPIN-Diffusion-Iter3 & \textbf{0.2825} & \textbf{6.0480}  & \underline{1.3095} & \textbf{22.3361} & \textbf{7.4940} \\
    \bottomrule
    \end{tabular}%
\end{table*}

\begin{table*}[ht!]
    \caption{The results of mean scores on HPSv2. We report the mean and median of PickScore, HPS, ImageReward and Aesthetic score over the whole dataset. We also report the average score over the four evaluation metrics. SPIN-Diffusion outperforms all the baselines in terms of four metrics.}
    \label{tab:hpsv2_mean}
    \centering
    \begin{tabular}{l |c c c c c c c c}
    \toprule
        Model & HPS $\uparrow$ & Aesthetic $\uparrow$ & ImageReward $\uparrow$ & PickScore $\uparrow$ & Average $\uparrow$ \\
        \midrule
    SD-1.5 & 0.2783 & 5.9017 & 0.8548 & 21.4978 & 7.1332 \\
    SFT (ours) & 0.2846 & 6.0378 & \textbf{1.1547} & 21.8549 & 7.333  \\
    Diffusion-DPO & \underline{0.2843} & 6.0306 & \underline{1.1391} & \underline{22.3012} & 7.4388 \\
    \midrule
    \rowcolor{LightCyan} SPIN-Diffusion-Iter1 & 0.2804 & 6.1943 & 1.0133 & 21.8778 & 7.3415 \\
    \rowcolor{LightCyan} SPIN-Diffusion-Iter2 & 0.2838 & \underline{6.3403} & 1.1145 & 22.2994 & \underline{7.5095} \\
    \rowcolor{LightCyan} SPIN-Diffusion-Iter3 & \textbf{0.2849} & \textbf{6.342}  & 1.1292 & \textbf{22.3415} & \textbf{7.5244} \\
    \bottomrule
    \end{tabular}%
\end{table*}

\begin{table*}[ht!]
    \caption{The results of median scores on HPSv2. We report the mean and median of PickScore, HPS, ImageReward and Aesthetic score over the whole dataset. We also report the average score over the four evaluation metrics. SPIN-Diffusion outperforms all the baselines in terms of four metrics.}
    \label{tab:hpsv2_median}
    \centering
    \begin{tabular}{l |c c c c c c c c}
    \toprule
        Model & HPS $\uparrow$ & Aesthetic $\uparrow$ & ImageReward $\uparrow$ & PickScore $\uparrow$ & Average $\uparrow$ \\
        \midrule
    SD-1.5 & 0.2781 & 5.8529 & 0.9324 & 21.4825 & 7.1365 \\
    SFT (ours) & 0.2847 & 6.0057 & 1.308  & 21.8211 & 7.3549 \\
    Diffusion-DPO & 0.2847 & 5.9878 & 1.3085 & 22.2854 & 7.4666 \\
    \midrule
    \rowcolor{LightCyan} SPIN-Diffusion-Iter1 & 0.2803 & 6.1519 & 1.1331 & 21.858  & 7.3558 \\
    \rowcolor{LightCyan} SPIN-Diffusion-Iter2 & 0.2839 & 6.3401 & 1.2711 & 22.2577 & 7.5382 \\
    \rowcolor{LightCyan} SPIN-Diffusion-Iter3 & 0.2849 & 6.3296 & 1.2853 & 22.3029 & 7.5507 \\
    \bottomrule
    \end{tabular}%
\end{table*}

\begin{figure*}[ht!]
    \centering
    \includegraphics[width=0.99\textwidth]{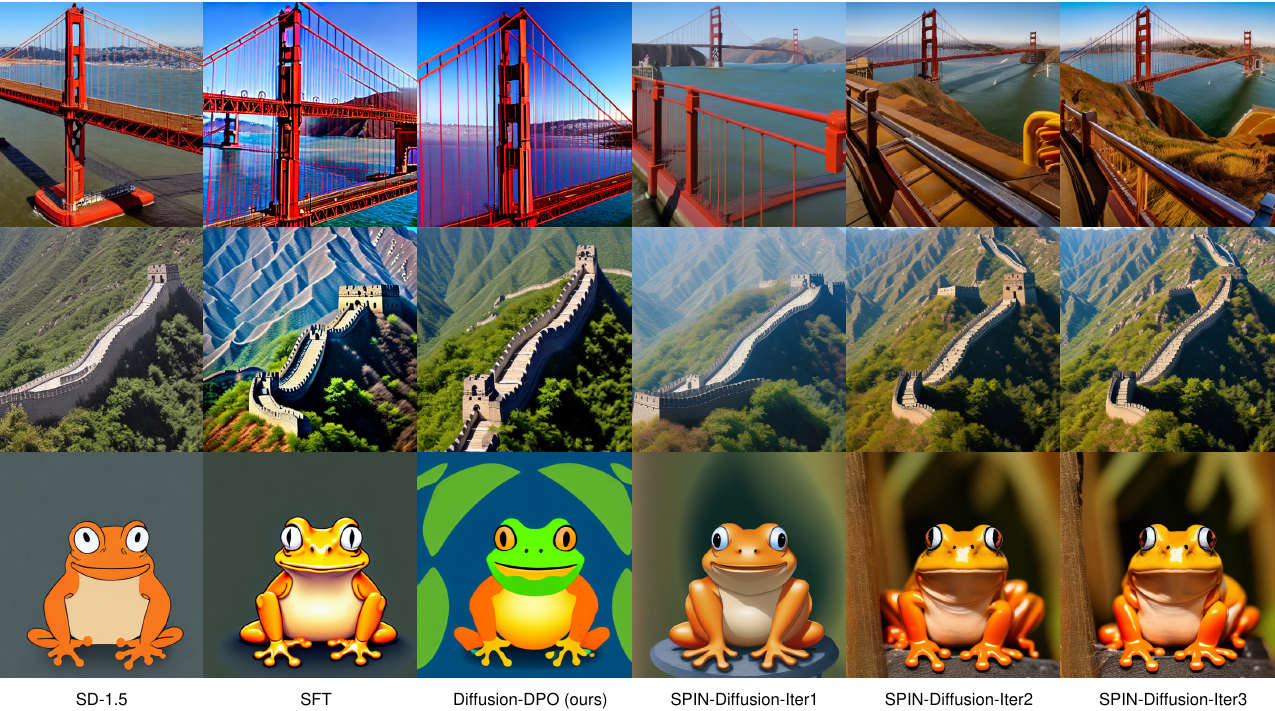}
    \caption{We show the images generated by different models based on prompts from PartiPrompts. The prompts are ``\textit{a photo of san francisco's golden gate bridge}'', ``\textit{an aerial view of the Great Wall}'' and ``\textit{Face of an orange frog in cartoon style}''. The models are: SD-1.5, SFT, Diffusion-DPO, Diffusion-DPO (ours), SPIN-Diffusion-Iter1 from left to right. SPIN-Diffusion demonstrates a notable improvement in image quality}
\label{fig:fig_qual_parti}
\end{figure*}

We also conduct experiment on PartiPrompts \citep{yu2022scaling} and HPSv2 \citep{wu2023human}. PartiPrompts consist of 1632 prompts that contains a wide range of categories and difficulties that beyond daily scenarios and natural objects. HPSv2 is a text-image prefence dataset, where the prompts come from DiffusionDB and MSCOCO \citep{lin2014microsoft} dataset. In our experiment, we use the prompts from its test set, which contains 3200 prompts. 
We use the same evaluation metrics as before and the results are shown in Table~\ref{tab:parti_mean} and~\ref{tab:parti_median}. The results show that, on both PartiPrompts and HPSv2, SPIN-Diffusion achieves a comparable performance with Diffusion DPO (ours) and surpasses other baseline models at the first iteration. SPIN-Diffusion further reaches an average score of 9.265 and 9.326 on PartiPrompts and HPSv2 dataset respectively at second iteration, which outpuerforms all other baselines by a large margin. These results consolidate our statement that $\method$ shows a superior performance over both SFT and DPO. We also conduct qualitative result on PartiPrompts and the results are shown in Figure~\ref{fig:fig_qual_parti}.


\section{Additional Details for SPIN-Diffusion}\label{app:detail}
\subsection{Additional Details of DDIM.} 
Given a prompt $\cbb$, image $\xb_0 $, sequence $\{\alpha_{t}\}_{t=1}^{T} \subseteq (0,1] $ and $\{\sigma_t\}_{t=1}^{T} \subseteq [0,+\infty)$, the forward diffusion process defined in \eqref{eq:forward} is 
\begin{align*}
q(\xb_{1: T} |\boldsymbol{x}_0):= q(\xb_T | \xb_0) \prod_{t=2}^T q(\xb_{t-1} | \xb_t, \xb_0),     
\end{align*}
where $q(\xb_T | \xb_0)= \cN(\sqrt{\alpha_T} \xb_0, (1-\alpha_T) \Ib)$ and  $q(\xb_{t-1} | \xb_t, \xb_0)$ admits the following distribution, 
\begin{align}
\cN\bigg(\sqrt{\alpha_{t-1}} \xb_0  +\sqrt{1-\alpha_{t-1}-\sigma_t^2} \cdot \frac{\xb_t-\sqrt{\alpha_t} \xb_0}{\sqrt{1-\alpha_t}}, \sigma_t^2 \Ib\bigg). \label{eq:Gaussian1} 
\end{align}
Here $\{\alpha_{t}\}_{t=1}^{T}$ is a decreasing sequence with $\alpha_0 = 1$ and $\alpha_{T}$ approximately zero. By Bayesian rule, we can show that this diffusion process ensures that $q(\xb_t | \xb_0)= \cN(\sqrt{\alpha_t} \xb_0, (1-\alpha_t) \Ib)$  for all $t$ and reduces to DDPM \citep{ho2020denoising} with a special choice of $\sigma_t = \sqrt{(1-\alpha_{t-1})/(1-\alpha_t)}\sqrt{(1-\alpha_t /\alpha_{t-1})}$. 

Given noise schedule $\alpha_t$ and $\sigma_t$, examples from the generative model follows 
\begin{align*}
p_{\btheta}(\xb_{0:T} | \cbb) &= \prod_{t=1}^{T} p_{\btheta}(\xb_{t-1} | \xb_t, \cbb) \cdot p_{\btheta}(\xb_{T} | \cbb), \notag\\
p_{\btheta}(\xb_{t-1} | \xb_t, \cbb) &= \mathcal{N}\big( \bmu_{\btheta}(\xb_t,\cbb,t), \sigma_{t}^{2} \Ib\big).    
\end{align*}
Here $\btheta$ belongs to the parameter space $\bTheta$ and $\bmu_{\btheta}(\xb_t,\cbb,t)$ is the mean of the Gaussian that can be parameterized \citep{ho2020denoising, song2020denoising} as 
\begin{align}
\bmu_{\btheta}(\xb_t,\cbb,t) &= \sqrt{\alpha_{t-1}}\bigg(\frac{\xb_t-\sqrt{1-\alpha_t} \bepsilon_{\btheta}(\xb_t, \cbb, t)}{\sqrt{\alpha_t}}\bigg)+\sqrt{1-\alpha_{t-1}-\sigma_t^2} \cdot \bepsilon_{\btheta}(\xb_t, \cbb, t), \label{eq:mutheta} 
\end{align}
where $\{\bepsilon_{\btheta}(\xb_t, \cbb, t)\}_{t=1}^{T}$ are score functions that approximate noise. Compare \eqref{eq:mutheta} and \eqref{eq:Gaussian1}, we can see that $\big(\frac{\xb_t-\sqrt{1-\alpha_t} \bepsilon_{\btheta}(\xb_t, \cbb, t)}{\sqrt{\alpha_t}}\big)$ approximates $\xb_0$, and $\bepsilon_{\btheta}$ approximates  the noise $\bepsilon_t:=\frac{\xb_t-\sqrt{\alpha_t} \xb_0}{\sqrt{1-\alpha_t}} \sim \cN(0, \Ib)$. 

\subsection{Decoupling Technique}
In Section~\ref{sec:method}, we demonstrate that the objective function defined in \eqref{eq:loss_score} can be simplified to the form in \eqref{eq:newloss}. This reformulation only requires considering two consecutive sampling steps, $t-1$ and $t$, rather than involving all intermediate steps. Now, we provide a detailed derivation.

\begin{proof}[Proof of Lemma~\ref{lm:uppderbound}]
\begin{align*}
&L_{\method}(\btheta, \btheta_k)\\
&=\EE_{\cbb \sim q(\cdot), \xb_{0:T} \sim p_{\mathrm{data}}(\cdot|\cbb), \xb'_{0:T} \sim p_{\btheta_k}(\cdot|\cbb)}\bigg[\ell\bigg(- \sum_{t=1}^{T}\frac{\beta_t}{T}\Big[\big\|\xb_{t-1} - \bmu_{\btheta}(\xb_t,\cbb,t)\big\|_{2}^{2} - \big\|\xb_{t-1} - \bmu_{\btheta_k}(\xb_t,\cbb,t)\big\|_{2}^{2}\\
&\qquad - \big\|\xb'_{t-1}- \bmu_{\btheta}(\xb'_t,\cbb,t)\big\|_{2}^{2} +\big\|\xb'_{t-1} - \bmu_{\btheta_k}(\xb'_t,\cbb,t)\big\|_{2}^{2}\Big]\bigg)\bigg]\\
&\leq \EE_{\cbb \sim q(\cdot), \xb_{0:T} \sim p_{\mathrm{data}}(\cdot|\cbb), \xb'_{0:T} \sim p_{\btheta_k}(\cdot|\cbb)}\bigg[\frac{1}{T}\sum_{t=1}^{T}\ell\bigg(- \beta_t\Big[\big\|\xb_{t-1} - \bmu_{\btheta}(\xb_t,\cbb,t)\big\|_{2}^{2} - \big\|\xb_{t-1} - \bmu_{\btheta_k}(\xb_t,\cbb,t)\big\|_{2}^{2}\\
&\qquad - \big\|\xb'_{t-1}- \bmu_{\btheta}(\xb'_t,\cbb,t)\big\|_{2}^{2} +\big\|\xb'_{t-1} - \bmu_{\btheta_k}(\xb'_t,\cbb,t)\big\|_{2}^{2}\Big]\bigg)\bigg]\\
&= \EE_{\cbb \sim q(\cdot), \xb_{0:T} \sim p_{\mathrm{data}}(\cdot|\cbb), \xb'_{0:T} \sim p_{\btheta_k}(\cdot|\cbb), t \sim \mathrm{Uniform}\{1,\ldots,T\}}\bigg[\ell\bigg(- \beta_t\Big[\big\|\xb_{t-1} - \bmu_{\btheta}(\xb_t,\cbb,t)\big\|_{2}^{2}\\
&\qquad - \big\|\xb_{t-1} - \bmu_{\btheta_k}(\xb_t,\cbb,t)\big\|_{2}^{2} - \big\|\xb'_{t-1}- \bmu_{\btheta}(\xb'_t,\cbb,t)\big\|_{2}^{2} +\big\|\xb'_{t-1} - \bmu_{\btheta_k}(\xb'_t,\cbb,t)\big\|_{2}^{2}\Big]\bigg)\bigg]\\
&= \EE_{\cbb \sim q(\cbb), (\xb_{t-1}, \xb_{t}) \sim p_{\mathrm{data}}(\xb_{t-1}, \xb_{t}|\cbb), (\xb'_{t-1}, \xb'_{t}) \sim  p_{\btheta_{k}}(\xb'_{t-1}, \xb'_{t}|\cbb), t \sim \mathrm{Uniform}\{1,\ldots,T\}.}\bigg[\ell\bigg(- \beta_t\Big[\big\|\xb_{t-1} - \bmu_{\btheta}(\xb_t,\cbb,t)\big\|_{2}^{2}\\
&\qquad - \big\|\xb_{t-1} - \bmu_{\btheta_k}(\xb_t,\cbb,t)\big\|_{2}^{2} - \big\|\xb'_{t-1}- \bmu_{\btheta}(\xb'_t,\cbb,t)\big\|_{2}^{2} +\big\|\xb'_{t-1} - \bmu_{\btheta_k}(\xb'_t,\cbb,t)\big\|_{2}^{2}\Big]\bigg)\bigg]\\
&= L_{\method}^{\mathrm{approx}}(\btheta, \btheta_k),
\end{align*}
where the first inequality is by  Jensen’s inequality and the convexity of the function $\ell$, the second equality is by integrating the average $\frac{1}{T}\sum_{t=1}^{T}$ into the expectation via $t \sim \mathrm{Uniform}\{1,\ldots, T\}$, and the third inequality holds because the argument inside the expectation is only depend of sampling step $t-1$ and $t$.  
\end{proof}

\subsection{Objective Function of SPIN-Diffusion}
We look deep into the term $\big\|\xb_{t-1} - \bmu_{\btheta}(\xb_t,\cbb,t)\big\|_{2}^{2}$ and $\big\|\xb'_{t-1} - \bmu_{\btheta}(\xb'_t,\cbb,t)\big\|_{2}^{2}$ of \eqref{eq:loss_score} and \eqref{eq:newloss} in this section.

\noindent\textbf{When $\xb_{0:T}$ Follows Forward Process.}  We have that $\xb_{0:T} \sim p_{\mathrm{data}}(\cdot|\cbb)$ and by \eqref{eq:Gaussian1} and \eqref{eq:mutheta} we have that 
\begin{align*}
\xb_{t-1} &= \sqrt{\alpha_{t-1}} \xb_0+\sqrt{1-\alpha_{t-1}-\sigma_t^2} \cdot \frac{\xb_t-\sqrt{\alpha_t} \xb_0}{\sqrt{1-\alpha_t}} + \sigma_{t} \hat{\bepsilon}_{t}\\
\bmu_{\btheta}(\xb_t,\cbb,t)&= \sqrt{\alpha_{t-1}}\bigg(\frac{\xb_t-\sqrt{1-\alpha_t} \bepsilon_{\btheta}(\xb_t, \cbb, t)}{\sqrt{\alpha_t}}\bigg)+\sqrt{1-\alpha_{t-1}-\sigma_t^2} \cdot \bepsilon_{\btheta}(\xb_t, \cbb, t),
\end{align*}
where $\hat{\bepsilon}_{t} \sim \cN(0,\Ib)$. Therefore, $\big\|\xb_{t-1} - \bmu_{\btheta}(\xb_t,\cbb,t)\big\|_{2}^{2}$ can be simplified to 
\begin{align}
h_t^2\bigg\|\frac{\xb_t-\sqrt{\alpha_t} \xb_0}{\sqrt{1-\alpha_t}} -   \bepsilon_{\btheta}(\xb_t, \cbb, t) + (\sigma_{t}/h_t)\cdot\hat{\bepsilon}_{t}\bigg\|_{2}^{2},  \label{eq:term1}
\end{align}
where $h_t = \big[\sqrt{1 - \alpha_{t-1} - \sigma_{t}^{2}} - \sqrt{\alpha_{t-1}/\alpha_t}\sqrt{1-\alpha_{t-1}}\big]$ and $\frac{\xb_t-\sqrt{\alpha_t} \xb_0}{\sqrt{1-\alpha_t}} \sim \cN(0, \Ib)$ following a Gaussian distribution. When $\sigma_t \rightarrow 0$, \eqref{eq:term1} becomes $h_t^2\big\|\bepsilon_{t} -   \bepsilon_{\btheta}(\xb_t, \cbb, t)\big\|_{2}^{2}$ with $h_t = \big[\sqrt{1 - \alpha_{t-1}} - \sqrt{\alpha_{t-1}/\alpha_t}\sqrt{1-\alpha_{t-1}}\big]$ and $\bepsilon_{t} := \frac{\xb_t-\sqrt{\alpha_t} \xb_0}{\sqrt{1-\alpha_t}} \sim \cN(0, \Ib)$.  

\paragraph{When $\xb'_{0:T}$ Follows the Backward Process.}  We have that $\xb'_{0:T} \sim p_{\btheta_{k}}(\cdot|\cbb)$ and 
\begin{align*}
\xb'_{t-1} &= \bmu_{\btheta_{k}}(\xb'_t,\cbb,t) + \sigma_{t} \hat{\bepsilon}_{t}' \\
&= \sqrt{\alpha_{t-1}}\bigg(\frac{\xb'_t-\sqrt{1-\alpha_t} \bepsilon_{\btheta_k}(\xb'_t, \cbb, t)}{\sqrt{\alpha_t}}\bigg)+\sqrt{1-\alpha_{t-1}-\sigma_t^2} \cdot \bepsilon_{\btheta_k}(\xb'_t, \cbb, t) + \sigma_{t} \hat{\bepsilon}_{t}'\\
\bmu_{\btheta}(\xb'_t,\cbb,t)&= \sqrt{\alpha_{t-1}}\bigg(\frac{\xb'_t-\sqrt{1-\alpha_t} \bepsilon_{\btheta}(\xb'_t, \cbb, t)}{\sqrt{\alpha_t}}\bigg)+\sqrt{1-\alpha_{t-1}-\sigma_t^2} \cdot \bepsilon_{\btheta}(\xb'_t, \cbb, t),
\end{align*}
where $\bepsilon_{t}' \sim \cN(0,\Ib)$. Therefore, $\big\|\xb'_{t-1} - \bmu_{\btheta}(\xb'_t,\cbb,t)\big\|_{2}^{2}$ can be simplified to 
\begin{align}
h_t^2\big\|\bepsilon_{\btheta_k}(\xb'_t, \cbb, t) -   \bepsilon_{\btheta}(\xb_t, \cbb, t) + (\sigma_t/h_t)\cdot\hat{\bepsilon}_{t}'\big\|_{2}^{2}, \label{eq:term2}
\end{align}
where $h_t = \big[\sqrt{1 - \alpha_{t-1} - \sigma_{t}^{2}} - \sqrt{\alpha_{t-1}/\alpha_t}\sqrt{1-\alpha_{t-1}}\big]$. When $\sigma_t \rightarrow 0$, \eqref{eq:term2} becomes $h_t^2\big\|\bepsilon_{\btheta_k}(\xb'_t, \cbb, t) -   \bepsilon_{\btheta}(\xb_t, \cbb, t)\big\|_{2}^{2}$ with $h_t = \big[\sqrt{1 - \alpha_{t-1}} - \sqrt{\alpha_{t-1}/\alpha_t}\sqrt{1-\alpha_{t-1}}\big]$.  

\paragraph{Simple Decoupled SPIN-Diffusion Objective Function.}
Substituting \eqref{eq:term1} and \eqref{eq:term2} into $\eqref{eq:newloss}$ and applying $\sigma_{t} \rightarrow 0$ yields, 
\begin{align}
L_{\method}^{\mathrm{approx}}(\btheta, \btheta_k) &=\EE\bigg[\ell\bigg(- \beta_t
h_t^2\Big[\big\|\bepsilon_t - \bepsilon_{\btheta}(\xb_t,\cbb,t)\big\|_{2}^{2} - \big\|\bepsilon_t - \bepsilon_{\btheta_k}(\xb_t,\cbb,t)\big\|_{2}^{2} \notag\\
&\qquad - \big\|\bepsilon_{\btheta_k}(\xb'_t,\cbb,t)- \bepsilon_{\btheta}(\xb'_t,\cbb,t)\big\|_{2}^{2} \Big]\bigg)\bigg],  \label{eq:simplifiednewloss}   
\end{align}
where $h_{t} = \sqrt{1-\alpha_{t-1}} - \sqrt{\alpha_{t-1}/\alpha_t}\sqrt{1-\alpha_{t-1}}$,  $\xb_t = \alpha_{t}\xb_0 + (1-\alpha_t)\bepsilon_t$, and the expectation is computed over the distribution,$\cbb \sim q(\cbb), \xb_{0} \sim p_{\mathrm{data}}(\xb_0|\cbb), \xb'_{t} \sim p_{\btheta_{k}}(\xb'_{t}|\cbb)$, $\bepsilon_{t} \sim \cN(0, \Ib)$ and $t \sim \mathrm{Uniform}\{1,\ldots,T\}$. \eqref{eq:simplifiednewloss} still need the intermediate steps $\xb_{t}'$, as discussed below \eqref{eq:newloss} in Section~\ref{sec:method}, we can approximate the backward process with the forward process and obtain
\begin{align*}
L_{\method}^{\mathrm{approx}}(\btheta, \btheta_k) &=\EE\bigg[\ell\bigg(- \beta_t
h_t^2\Big[\big\|\bepsilon_t - \bepsilon_{\btheta}(\xb_t,\cbb,t)\big\|_{2}^{2} - \big\|\bepsilon_t - \bepsilon_{\btheta_k}(\xb_t,\cbb,t)\big\|_{2}^{2} - \big\|\bepsilon_t'- \bepsilon_{\btheta}(\xb'_t,\cbb,t)\big\|_{2}^{2}\notag \\
&\qquad +\big\|\bepsilon_t' - \bepsilon_{\btheta_k}(\xb'_t,\cbb,t)\big\|_{2}^{2}\Big]\bigg)\bigg], 
\end{align*}
where $h_{t} = \sqrt{1-\alpha_{t-1}} - \sqrt{\alpha_{t-1}/\alpha_t}\sqrt{1-\alpha_{t-1}}$,  $\xb_t = \alpha_{t}\xb_0 + (1-\alpha_t)\bepsilon_t$,  $\xb_t' = \alpha_{t}\xb_0' + (1-\alpha_t)\bepsilon_t'$, and the expectation is computed over the distribution,$\cbb \sim q(\cbb), \xb_{0} \sim p_{\mathrm{data}}(\xb_0|\cbb), \xb'_{0} \sim p_{\btheta_{k}}(\xb'_{0}|\cbb)$, $\bepsilon_t \sim \cN(0, \Ib)$, $\bepsilon_t' \sim \cN(0, \Ib)$ and $t \sim \mathrm{Uniform}\{1,\ldots,T\}$. 

\section{Proof of Theorems in Section~\ref{sec:thm}}\label{sec:proof}

\begin{proof}[Proof of Theorem~\ref{thm:notstop}] 
We know the objective function \eqref{eq:newloss} can be simplified to $\eqref{eq:simplifiednewloss}$ by parameterize with $\bepsilon_{\btheta}$.
So we study the objective function \eqref{eq:simplifiednewloss} as follows,
\begin{align*}
L_{\method}^{\mathrm{approx}}(\btheta, \btheta_k)
&=\EE\bigg[\ell\bigg(- \beta_t
h_t^2\Big[\big\|\bepsilon_t - \bepsilon_{\btheta}(\xb_t,\cbb,t)\big\|_{2}^{2} - \big\|\bepsilon_t - \bepsilon_{\btheta_k}(\xb_t,\cbb,t)\big\|_{2}^{2}\\
&\qquad - \big\|\bepsilon_{\btheta_k}(\xb'_t,\cbb,t)- \bepsilon_{\btheta}(\xb'_t,\cbb,t)\big\|_{2}^{2} \Big]\bigg)\bigg].
\end{align*}
Since $\btheta_{k}$ is not the global optimum of $L_{\mathrm{DSM}}$, there exists $\btheta^{*}$ such that $L_{\mathrm{DSM}}(\btheta^{*}) \leq L_{\mathrm{DSM}}(\btheta_k)$, which gives that 
\begin{align}
 \mathbb{E}\Big[\gamma_t\big\| \bepsilon_{\btheta^{*}}(\xb_t, \cbb, t) - \bepsilon_t\big\|_2^2\Big] \leq  \mathbb{E}\Big[\gamma_t\big\| \bepsilon_{\btheta_k}(\xb_t, \cbb, t) - \bepsilon_t\big\|_2^2\Big], \label{eq:notoptimal}
\end{align}
where the expectation is computed over the distribution $\cbb\sim q(\cdot), \xb_0 \sim q_{\mathrm{data}}(\cdot|\cbb), \bepsilon_t\sim \cN(0, \Ib)$, $t \sim \mathrm{Uniform}\{1,\ldots,T\}$. Define $g(s) = L_{\method}^{\mathrm{approx}}(\btheta^{*}, \btheta_k)$ with $\beta_{t} = s\gamma_{t}/h_{t}^2$ as follows,
\begin{align*}
g(s) &= \EE\bigg[\ell\bigg(- \beta_t
h_t^2\Big[\big\|\bepsilon_t - \bepsilon_{\btheta^{*}}(\xb_t,\cbb,t)\big\|_{2}^{2} - \big\|\bepsilon_t - \bepsilon_{\btheta_k}(\xb_t,\cbb,t)\big\|_{2}^{2} - \big\|\bepsilon_{\btheta_k}(\xb'_t,\cbb,t)- \bepsilon_{\btheta^{*}}(\xb'_t,\cbb,t)\big\|_{2}^{2} \Big]\bigg)\bigg]\\
&= \EE\bigg[\ell\bigg(- s\lambda_t\Big[\big\|\bepsilon_t - \bepsilon_{\btheta^{*}}(\xb_t,\cbb,t)\big\|_{2}^{2} - \big\|\bepsilon_t - \bepsilon_{\btheta_k}(\xb_t,\cbb,t)\big\|_{2}^{2} - \big\|\bepsilon_{\btheta_k}(\xb'_t,\cbb,t)- \bepsilon_{\btheta^{*}}(\xb'_t,\cbb,t)\big\|_{2}^{2} \Big]\bigg)\bigg].
\end{align*}
Then we have that $g(0) = 0$ and
\begin{align*}
\frac{dg}{ds}(0) &= \EE\bigg[-\ell'(0)\lambda_t\bigg(\big\|\bepsilon_t - \bepsilon_{\btheta^{*}}(\xb_t,\cbb,t)\big\|_{2}^{2} - \big\|\bepsilon_t - \bepsilon_{\btheta_k}(\xb_t,\cbb,t)\big\|_{2}^{2} - \big\|\bepsilon_{\btheta_k}(\xb'_t,\cbb,t)- \bepsilon_{\btheta^{*}}(\xb'_t,\cbb,t)\big\|_{2}^{2}\bigg)\bigg]\\
&= -\ell'(0) \bigg(\EE\gamma_t\big\|\bepsilon_t - \bepsilon_{\btheta^{*}}(\xb_t,\cbb,t)\big\|_{2}^{2} - \EE\gamma_t\big\|\bepsilon_t - \bepsilon_{\btheta_k}(\xb_t,\cbb,t)\big\|_{2}^{2}\\
&\qquad - \EE\gamma_t\big\|\bepsilon_{\btheta_k}(\xb'_t,\cbb,t)- \bepsilon_{\btheta^{*}}(\xb'_t,\cbb,t)\big\|_{2}^{2}\bigg)\\
&<0,
\end{align*}
where the last inequality is by \eqref{eq:notoptimal}. Here $\xb_t = \sqrt{\alpha_t}\xb_0 + \sqrt{1-\alpha_t}\bepsilon_t$ and the expectation is computed over the distribution $\cbb\sim q(\cdot), \xb_0 \sim q_{\mathrm{data}}(\cdot|\cbb), \bepsilon_t\sim \cN(0, \Ib)$, $t \sim \mathrm{Uniform}\{1,\ldots,T\}$.

Therefore, there exist a $\lambda_0$ such that for all $0 < \lambda < \lambda_0$, we have $g(\lambda) < \ell(0)$. So for those $\beta_{t} = s\gamma_{t}/h_{t}^2$ with $0 < \lambda < \lambda_0$, we have that 
\begin{align*}
L_{\method}^{\mathrm{approx}}(\btheta^{*}, \btheta_{k}) = g(\lambda) < g(0) = L_{\method}(\btheta_{k}, \btheta_{k}),
\end{align*}
where the inequality holds due to the choice of $\lambda$. Therefore, we conclude that $\btheta_k$ is not the global optimum of \eqref{eq:newloss}.
\end{proof}

\begin{proof}[Proof of Theorem~\ref{thm:stop}]
By \eqref{eq:newloss} we have that,
\begin{align*}
L_{\method}^{\mathrm{approx}}(\btheta, \btheta_k)&=\EE\bigg[\ell\bigg(- \beta_t\Big[\big\|\xb_{t-1} - \bmu_{\btheta}(\xb_t,\cbb,t)\big\|_{2}^{2}  - \big\|\xb_{t-1} - \bmu_{\btheta_k}(\xb_t,\cbb,t)\big\|_{2}^{2}\notag \\
&\qquad- \big\|\xb'_{t-1}- \bmu_{\btheta}(\xb'_t,\cbb,t)\big\|_{2}^{2}+\big\|\xb'_{t-1} - \bmu_{\btheta_k}(\xb'_t,\cbb,t)\big\|_{2}^{2}\Big]\bigg)\bigg], 
\end{align*}
where the expectationis computed over the distribution $\cbb \sim q(\cbb)$,  $(\xb_{t-1}, \xb_{t}) \sim \int p_{\mathrm{data}}(\xb_{0}|\cbb)q(\xb_{t-1}, \xb_{t}|\xb_{0})d\xb_0$, $(\xb_{t-1}', \xb'_{t}) \sim \int p_{\btheta_{k}}(\xb'_{0}|\cbb)q(\xb'_{t-1}, \xb'_{t}|\xb'_{0})d\xb'_0$, $t \sim \mathrm{Uniform}\{1,\ldots,T\}$. Since $p_{\mathrm{data}}(\cdot|\cbb)= p_{\btheta_t}(\cdot|\cbb)$, we can conclude that $(\xb_{t-1}, \xb_{t})$ and  $(\xb'_{t-1}, \xb'_{t})$ are independent and identically distributed random variable. Therefore, by symmetry property of $(\xb_{t-1}, \xb_t)$ and $(\xb_{t-1}', \xb'_t)$, we have for any $\btheta \in \bTheta$ that 

\begin{align*}
L_{\method}^{\mathrm{approx}}(\btheta, \btheta_k)&=\frac{1}{2}\EE\bigg[\ell\bigg(- \beta_t\Big[\big\|\xb_{t-1} - \bmu_{\btheta}(\xb_t,\cbb,t)\big\|_{2}^{2}  - \big\|\xb_{t-1} - \bmu_{\btheta_k}(\xb_t,\cbb,t)\big\|_{2}^{2}\\
&\qquad- \big\|\xb'_{t-1}- \bmu_{\btheta}(\xb'_t,\cbb,t)\big\|_{2}^{2}+\big\|\xb'_{t-1} - \bmu_{\btheta_k}(\xb'_t,\cbb,t)\big\|_{2}^{2}\Big]\bigg)\\
&\qquad+\ell\bigg(- \beta_t\Big[\big\|\xb'_{t-1} - \bmu_{\btheta}(\xb'_t,\cbb,t)\big\|_{2}^{2}  - \big\|\xb'_{t-1} - \bmu_{\btheta_k}(\xb'_t,\cbb,t)\big\|_{2}^{2}\\
&\qquad- \big\|\xb_{t-1}- \bmu_{\btheta}(\xb_t,\cbb,t)\big\|_{2}^{2}+\big\|\xb_{t-1} - \bmu_{\btheta_k}(\xb_t,\cbb,t)\big\|_{2}^{2}\Big]\bigg)\bigg]\\
&\geq \EE\bigg[\ell\bigg(- \frac{\beta_t}{2}\Big[\big\|\xb_{t-1} - \bmu_{\btheta}(\xb_t,\cbb,t)\big\|_{2}^{2}  - \big\|\xb_{t-1} - \bmu_{\btheta_k}(\xb_t,\cbb,t)\big\|_{2}^{2}\\
&\qquad- \big\|\xb'_{t-1}- \bmu_{\btheta}(\xb'_t,\cbb,t)\big\|_{2}^{2}+\big\|\xb'_{t-1} - \bmu_{\btheta_k}(\xb'_t,\cbb,t)\big\|_{2}^{2}\Big]\\
&\qquad - \frac{\beta_t}{2}\Big[\big\|\xb'_{t-1} - \bmu_{\btheta}(\xb'_t,\cbb,t)\big\|_{2}^{2}  - \big\|\xb'_{t-1} - \bmu_{\btheta_k}(\xb'_t,\cbb,t)\big\|_{2}^{2}\\
&\qquad- \big\|\xb_{t-1}- \bmu_{\btheta}(\xb_t,\cbb,t)\big\|_{2}^{2}+\big\|\xb_{t-1} - \bmu_{\btheta_k}(\xb_t,\cbb,t)\big\|_{2}^{2}\Big]\bigg)\bigg]\\
&= \ell(0),
\end{align*}
where the inequality is due to Jensen's inequality (recalling that $\ell$ is convex in Assumption \ref{assm:1}), and  the expectation is computed over the distribution $\cbb \sim q(\cbb)$,  $(\xb_{t-1}, \xb_{t}) \sim \int p_{\mathrm{data}}(\xb_{0}|\cbb)q(\xb_{t-1}, \xb_{t}|\xb_{0})d\xb_0$, $(\xb_{t-1}', \xb'_{t}) \sim \int p_{\btheta_{k}}(\xb'_{0}|\cbb)q(\xb'_{t-1}, \xb'_{t}|\xb'_{0})d\xb'_0$, $t \sim \mathrm{Uniform}\{1,\ldots,T\}$. Therefore, we have that 
\begin{align*}
L^{\mathrm{approx}}_{\method}(\btheta, \btheta_k) \geq \ell(0) = L^{\mathrm{approx}}_{\method}(\btheta_k, \btheta_k),    
\end{align*}
which means that $\btheta_k$ is the global optimum of \eqref{eq:newloss}. As a consequence, $\btheta_{k+1} = \btheta_{k}$.  
\end{proof}

\bibliography{deeplearningreference,selftraining}

\begin{thebibliography}{49}
\expandafter\ifx\csname natexlab\endcsname\relax\def\natexlab#1{#1}\fi
\expandafter\ifx\csname url\endcsname\relax
  \def\url#1{\texttt{#1}}\fi
\expandafter\ifx\csname urlprefix\endcsname\relax\def\urlprefix{URL }\fi

\bibitem[{Austin et~al.(2021)Austin, Johnson, Ho, Tarlow and Van
  Den~Berg}]{austin2021structured}
\textsc{Austin, J.}, \textsc{Johnson, D.~D.}, \textsc{Ho, J.}, \textsc{Tarlow,
  D.} and \textsc{Van Den~Berg, R.} (2021).
\newblock Structured denoising diffusion models in discrete state-spaces.
\newblock \textit{Advances in Neural Information Processing Systems}
  \textbf{34} 17981--17993.

\bibitem[{Betker et~al.(2023)Betker, Goh, Jing, Brooks, Wang, Li, Ouyang,
  Zhuang, Lee, Guo et~al.}]{betker2023improving}
\textsc{Betker, J.}, \textsc{Goh, G.}, \textsc{Jing, L.}, \textsc{Brooks, T.},
  \textsc{Wang, J.}, \textsc{Li, L.}, \textsc{Ouyang, L.}, \textsc{Zhuang, J.},
  \textsc{Lee, J.}, \textsc{Guo, Y.} \textsc{et~al.} (2023).
\newblock Improving image generation with better captions.
\newblock \textit{Computer Science. https://cdn. openai. com/papers/dall-e-3.
  pdf} \textbf{2} 3.

\bibitem[{Black et~al.(2023)Black, Janner, Du, Kostrikov and
  Levine}]{black2023training}
\textsc{Black, K.}, \textsc{Janner, M.}, \textsc{Du, Y.}, \textsc{Kostrikov,
  I.} and \textsc{Levine, S.} (2023).
\newblock Training diffusion models with reinforcement learning.
\newblock \textit{arXiv preprint arXiv:2305.13301} .

\bibitem[{Bradley and Terry(1952)}]{bradley1952rank}
\textsc{Bradley, R.~A.} and \textsc{Terry, M.~E.} (1952).
\newblock Rank analysis of incomplete block designs: I. the method of paired
  comparisons.
\newblock \textit{Biometrika} \textbf{39} 324--345.

\bibitem[{Caesar et~al.(2018)Caesar, Uijlings and Ferrari}]{caesar2018coco}
\textsc{Caesar, H.}, \textsc{Uijlings, J.} and \textsc{Ferrari, V.} (2018).
\newblock Coco-stuff: Thing and stuff classes in context.
\newblock In \textit{Proceedings of the IEEE conference on computer vision and
  pattern recognition}.

\bibitem[{Chen et~al.(2024)Chen, Deng, Yuan, Ji and Gu}]{chen2024self}
\textsc{Chen, Z.}, \textsc{Deng, Y.}, \textsc{Yuan, H.}, \textsc{Ji, K.} and
  \textsc{Gu, Q.} (2024).
\newblock Self-play fine-tuning converts weak language models to strong
  language models.
\newblock \textit{arXiv preprint arXiv:2401.01335} .

\bibitem[{Chen et~al.(2023)Chen, Yuan, Li, Kou, Zhang and Gu}]{chen2023fast}
\textsc{Chen, Z.}, \textsc{Yuan, H.}, \textsc{Li, Y.}, \textsc{Kou, Y.},
  \textsc{Zhang, J.} and \textsc{Gu, Q.} (2023).
\newblock Fast sampling via de-randomization for discrete diffusion models.
\newblock \textit{arXiv preprint arXiv:2312.09193} .

\bibitem[{Clark et~al.(2023)Clark, Vicol, Swersky and
  Fleet}]{clark2023directly}
\textsc{Clark, K.}, \textsc{Vicol, P.}, \textsc{Swersky, K.} and \textsc{Fleet,
  D.~J.} (2023).
\newblock Directly fine-tuning diffusion models on differentiable rewards.
\newblock \textit{arXiv preprint arXiv:2309.17400} .

\bibitem[{Corso et~al.(2022)Corso, St{\"a}rk, Jing, Barzilay and
  Jaakkola}]{corso2022diffdock}
\textsc{Corso, G.}, \textsc{St{\"a}rk, H.}, \textsc{Jing, B.},
  \textsc{Barzilay, R.} and \textsc{Jaakkola, T.} (2022).
\newblock Diffdock: Diffusion steps, twists, and turns for molecular docking.
\newblock \textit{arXiv preprint arXiv:2210.01776} .

\bibitem[{Creswell et~al.(2018)Creswell, White, Dumoulin, Arulkumaran, Sengupta
  and Bharath}]{creswell2018generative}
\textsc{Creswell, A.}, \textsc{White, T.}, \textsc{Dumoulin, V.},
  \textsc{Arulkumaran, K.}, \textsc{Sengupta, B.} and \textsc{Bharath, A.~A.}
  (2018).
\newblock Generative adversarial networks: An overview.
\newblock \textit{IEEE signal processing magazine} \textbf{35} 53--65.

\bibitem[{Dai et~al.(2023)Dai, Hou, Ma, Tsai, Wang, Wang, Zhang, Vandenhende,
  Wang, Dubey et~al.}]{dai2023emu}
\textsc{Dai, X.}, \textsc{Hou, J.}, \textsc{Ma, C.-Y.}, \textsc{Tsai, S.},
  \textsc{Wang, J.}, \textsc{Wang, R.}, \textsc{Zhang, P.},
  \textsc{Vandenhende, S.}, \textsc{Wang, X.}, \textsc{Dubey, A.}
  \textsc{et~al.} (2023).
\newblock Emu: Enhancing image generation models using photogenic needles in a
  haystack.
\newblock \textit{arXiv preprint arXiv:2309.15807} .

\bibitem[{Fan et~al.(2023)Fan, Watkins, Du, Liu, Ryu, Boutilier, Abbeel,
  Ghavamzadeh, Lee and Lee}]{fan2023dpok}
\textsc{Fan, Y.}, \textsc{Watkins, O.}, \textsc{Du, Y.}, \textsc{Liu, H.},
  \textsc{Ryu, M.}, \textsc{Boutilier, C.}, \textsc{Abbeel, P.},
  \textsc{Ghavamzadeh, M.}, \textsc{Lee, K.} and \textsc{Lee, K.} (2023).
\newblock Dpok: Reinforcement learning for fine-tuning text-to-image diffusion
  models.
\newblock \textit{arXiv preprint arXiv:2305.16381} .

\bibitem[{Gal et~al.(2022)Gal, Alaluf, Atzmon, Patashnik, Bermano, Chechik and
  Cohen-Or}]{gal2022image}
\textsc{Gal, R.}, \textsc{Alaluf, Y.}, \textsc{Atzmon, Y.}, \textsc{Patashnik,
  O.}, \textsc{Bermano, A.~H.}, \textsc{Chechik, G.} and \textsc{Cohen-Or, D.}
  (2022).
\newblock An image is worth one word: Personalizing text-to-image generation
  using textual inversion.
\newblock \textit{arXiv preprint arXiv:2208.01618} .

\bibitem[{Guan et~al.(2023)Guan, Zhou, Yang, Bao, Peng, Ma, Liu, Wang and
  Gu}]{guan2023decompdiff}
\textsc{Guan, J.}, \textsc{Zhou, X.}, \textsc{Yang, Y.}, \textsc{Bao, Y.},
  \textsc{Peng, J.}, \textsc{Ma, J.}, \textsc{Liu, Q.}, \textsc{Wang, L.} and
  \textsc{Gu, Q.} (2023).
\newblock Decompdiff: Diffusion models with decomposed priors for
  structure-based drug design .

\bibitem[{Ho et~al.(2020)Ho, Jain and Abbeel}]{ho2020denoising}
\textsc{Ho, J.}, \textsc{Jain, A.} and \textsc{Abbeel, P.} (2020).
\newblock Denoising diffusion probabilistic models.
\newblock \textit{Advances in neural information processing systems}
  \textbf{33} 6840--6851.

\bibitem[{Ho et~al.(2022)Ho, Saharia, Chan, Fleet, Norouzi and
  Salimans}]{ho2022cascaded}
\textsc{Ho, J.}, \textsc{Saharia, C.}, \textsc{Chan, W.}, \textsc{Fleet,
  D.~J.}, \textsc{Norouzi, M.} and \textsc{Salimans, T.} (2022).
\newblock Cascaded diffusion models for high fidelity image generation.
\newblock \textit{The Journal of Machine Learning Research} \textbf{23}
  2249--2281.

\bibitem[{Kingma and Welling(2013)}]{kingma2013auto}
\textsc{Kingma, D.~P.} and \textsc{Welling, M.} (2013).
\newblock Auto-encoding variational bayes.
\newblock \textit{arXiv preprint arXiv:1312.6114} .

\bibitem[{Kirstain et~al.(2023)Kirstain, Polyak, Singer, Matiana, Penna and
  Levy}]{kirstain2023pick}
\textsc{Kirstain, Y.}, \textsc{Polyak, A.}, \textsc{Singer, U.},
  \textsc{Matiana, S.}, \textsc{Penna, J.} and \textsc{Levy, O.} (2023).
\newblock Pick-a-pic: An open dataset of user preferences for text-to-image
  generation.
\newblock \textit{arXiv preprint arXiv:2305.01569} .

\bibitem[{Lee et~al.(2023)Lee, Liu, Ryu, Watkins, Du, Boutilier, Abbeel,
  Ghavamzadeh and Gu}]{lee2023aligning}
\textsc{Lee, K.}, \textsc{Liu, H.}, \textsc{Ryu, M.}, \textsc{Watkins, O.},
  \textsc{Du, Y.}, \textsc{Boutilier, C.}, \textsc{Abbeel, P.},
  \textsc{Ghavamzadeh, M.} and \textsc{Gu, S.~S.} (2023).
\newblock Aligning text-to-image models using human feedback.
\newblock \textit{arXiv preprint arXiv:2302.12192} .

\bibitem[{Lee et~al.(2024)Lee, Li, Ke, Yoo, Zhang, Yu, Wang, Deng, Entis, He,
  Li, Kim, Essa and Yang}]{lee2024parrot}
\textsc{Lee, S.~H.}, \textsc{Li, Y.}, \textsc{Ke, J.}, \textsc{Yoo, I.},
  \textsc{Zhang, H.}, \textsc{Yu, J.}, \textsc{Wang, Q.}, \textsc{Deng, F.},
  \textsc{Entis, G.}, \textsc{He, J.}, \textsc{Li, G.}, \textsc{Kim, S.},
  \textsc{Essa, I.} and \textsc{Yang, F.} (2024).
\newblock Parrot: Pareto-optimal multi-reward reinforcement learning framework
  for text-to-image generation.

\bibitem[{Lin et~al.(2014)Lin, Maire, Belongie, Hays, Perona, Ramanan,
  Doll{\'a}r and Zitnick}]{lin2014microsoft}
\textsc{Lin, T.-Y.}, \textsc{Maire, M.}, \textsc{Belongie, S.}, \textsc{Hays,
  J.}, \textsc{Perona, P.}, \textsc{Ramanan, D.}, \textsc{Doll{\'a}r, P.} and
  \textsc{Zitnick, C.~L.} (2014).
\newblock Microsoft coco: Common objects in context.
\newblock In \textit{Computer Vision--ECCV 2014: 13th European Conference,
  Zurich, Switzerland, September 6-12, 2014, Proceedings, Part V 13}. Springer.

\bibitem[{M{\"u}ller(1997)}]{muller1997integral}
\textsc{M{\"u}ller, A.} (1997).
\newblock Integral probability metrics and their generating classes of
  functions.
\newblock \textit{Advances in applied probability} \textbf{29} 429--443.

\bibitem[{Nichol et~al.(2021)Nichol, Dhariwal, Ramesh, Shyam, Mishkin, McGrew,
  Sutskever and Chen}]{nichol2021glide}
\textsc{Nichol, A.}, \textsc{Dhariwal, P.}, \textsc{Ramesh, A.}, \textsc{Shyam,
  P.}, \textsc{Mishkin, P.}, \textsc{McGrew, B.}, \textsc{Sutskever, I.} and
  \textsc{Chen, M.} (2021).
\newblock Glide: Towards photorealistic image generation and editing with
  text-guided diffusion models.
\newblock \textit{arXiv preprint arXiv:2112.10741} .

\bibitem[{Peebles and Xie(2023)}]{peebles2023scalable}
\textsc{Peebles, W.} and \textsc{Xie, S.} (2023).
\newblock Scalable diffusion models with transformers.
\newblock In \textit{Proceedings of the IEEE/CVF International Conference on
  Computer Vision}.

\bibitem[{Podell et~al.(2023)Podell, English, Lacey, Blattmann, Dockhorn,
  M{\"u}ller, Penna and Rombach}]{podell2023sdxl}
\textsc{Podell, D.}, \textsc{English, Z.}, \textsc{Lacey, K.},
  \textsc{Blattmann, A.}, \textsc{Dockhorn, T.}, \textsc{M{\"u}ller, J.},
  \textsc{Penna, J.} and \textsc{Rombach, R.} (2023).
\newblock Sdxl: Improving latent diffusion models for high-resolution image
  synthesis.
\newblock \textit{arXiv preprint arXiv:2307.01952} .

\bibitem[{Prabhudesai et~al.(2023)Prabhudesai, Goyal, Pathak and
  Fragkiadaki}]{prabhudesai2023aligning}
\textsc{Prabhudesai, M.}, \textsc{Goyal, A.}, \textsc{Pathak, D.} and
  \textsc{Fragkiadaki, K.} (2023).
\newblock Aligning text-to-image diffusion models with reward backpropagation.
\newblock \textit{arXiv preprint arXiv:2310.03739} .

\bibitem[{Rafailov et~al.(2023)Rafailov, Sharma, Mitchell, Ermon, Manning and
  Finn}]{rafailov2023direct}
\textsc{Rafailov, R.}, \textsc{Sharma, A.}, \textsc{Mitchell, E.},
  \textsc{Ermon, S.}, \textsc{Manning, C.~D.} and \textsc{Finn, C.} (2023).
\newblock Direct preference optimization: Your language model is secretly a
  reward model.
\newblock \textit{arXiv preprint arXiv:2305.18290} .

\bibitem[{Ramesh et~al.(2022)Ramesh, Dhariwal, Nichol, Chu and
  Chen}]{ramesh2022hierarchical}
\textsc{Ramesh, A.}, \textsc{Dhariwal, P.}, \textsc{Nichol, A.}, \textsc{Chu,
  C.} and \textsc{Chen, M.} (2022).
\newblock Hierarchical text-conditional image generation with clip latents.
\newblock \textit{arXiv preprint arXiv:2204.06125} \textbf{1} 3.

\bibitem[{Rombach et~al.(2022{\natexlab{a}})Rombach, Blattmann, Lorenz, Esser
  and Ommer}]{rombach2022high}
\textsc{Rombach, R.}, \textsc{Blattmann, A.}, \textsc{Lorenz, D.},
  \textsc{Esser, P.} and \textsc{Ommer, B.} (2022{\natexlab{a}}).
\newblock High-resolution image synthesis with latent diffusion models.
\newblock In \textit{Proceedings of the IEEE/CVF conference on computer vision
  and pattern recognition}.

\bibitem[{Rombach et~al.(2022{\natexlab{b}})Rombach, Blattmann, Lorenz, Esser
  and Ommer}]{Rombach_2022_CVPR}
\textsc{Rombach, R.}, \textsc{Blattmann, A.}, \textsc{Lorenz, D.},
  \textsc{Esser, P.} and \textsc{Ommer, B.} (2022{\natexlab{b}}).
\newblock High-resolution image synthesis with latent diffusion models.
\newblock In \textit{Proceedings of the IEEE/CVF Conference on Computer Vision
  and Pattern Recognition (CVPR)}.

\bibitem[{Ruiz et~al.(2023)Ruiz, Li, Jampani, Pritch, Rubinstein and
  Aberman}]{ruiz2023dreambooth}
\textsc{Ruiz, N.}, \textsc{Li, Y.}, \textsc{Jampani, V.}, \textsc{Pritch, Y.},
  \textsc{Rubinstein, M.} and \textsc{Aberman, K.} (2023).
\newblock Dreambooth: Fine tuning text-to-image diffusion models for
  subject-driven generation.
\newblock In \textit{Proceedings of the IEEE/CVF Conference on Computer Vision
  and Pattern Recognition}.

\bibitem[{Saharia et~al.(2022{\natexlab{a}})Saharia, Chan, Saxena, Li, Whang,
  Denton, Ghasemipour, Gontijo~Lopes, Karagol~Ayan, Salimans
  et~al.}]{saharia2022photorealistic}
\textsc{Saharia, C.}, \textsc{Chan, W.}, \textsc{Saxena, S.}, \textsc{Li, L.},
  \textsc{Whang, J.}, \textsc{Denton, E.~L.}, \textsc{Ghasemipour, K.},
  \textsc{Gontijo~Lopes, R.}, \textsc{Karagol~Ayan, B.}, \textsc{Salimans, T.}
  \textsc{et~al.} (2022{\natexlab{a}}).
\newblock Photorealistic text-to-image diffusion models with deep language
  understanding.
\newblock \textit{Advances in Neural Information Processing Systems}
  \textbf{35} 36479--36494.

\bibitem[{Saharia et~al.(2022{\natexlab{b}})Saharia, Ho, Chan, Salimans, Fleet
  and Norouzi}]{saharia2022image}
\textsc{Saharia, C.}, \textsc{Ho, J.}, \textsc{Chan, W.}, \textsc{Salimans,
  T.}, \textsc{Fleet, D.~J.} and \textsc{Norouzi, M.} (2022{\natexlab{b}}).
\newblock Image super-resolution via iterative refinement.
\newblock \textit{IEEE Transactions on Pattern Analysis and Machine
  Intelligence} \textbf{45} 4713--4726.

\bibitem[{Schuhmann et~al.(2022)Schuhmann, Beaumont, Vencu, Gordon, Wightman,
  Cherti, Coombes, Katta, Mullis, Wortsman et~al.}]{schuhmann2022laion}
\textsc{Schuhmann, C.}, \textsc{Beaumont, R.}, \textsc{Vencu, R.},
  \textsc{Gordon, C.}, \textsc{Wightman, R.}, \textsc{Cherti, M.},
  \textsc{Coombes, T.}, \textsc{Katta, A.}, \textsc{Mullis, C.},
  \textsc{Wortsman, M.} \textsc{et~al.} (2022).
\newblock Laion-5b: An open large-scale dataset for training next generation
  image-text models.
\newblock \textit{Advances in Neural Information Processing Systems}
  \textbf{35} 25278--25294.

\bibitem[{Segalis et~al.(2023)Segalis, Valevski, Lumen, Matias and
  Leviathan}]{segalis2023picture}
\textsc{Segalis, E.}, \textsc{Valevski, D.}, \textsc{Lumen, D.},
  \textsc{Matias, Y.} and \textsc{Leviathan, Y.} (2023).
\newblock A picture is worth a thousand words: Principled recaptioning improves
  image generation.
\newblock \textit{arXiv preprint arXiv:2310.16656} .

\bibitem[{Silver et~al.(2017{\natexlab{a}})Silver, Hubert, Schrittwieser,
  Antonoglou, Lai, Guez, Lanctot, Sifre, Kumaran, Graepel
  et~al.}]{silver2017masteringchess}
\textsc{Silver, D.}, \textsc{Hubert, T.}, \textsc{Schrittwieser, J.},
  \textsc{Antonoglou, I.}, \textsc{Lai, M.}, \textsc{Guez, A.},
  \textsc{Lanctot, M.}, \textsc{Sifre, L.}, \textsc{Kumaran, D.},
  \textsc{Graepel, T.} \textsc{et~al.} (2017{\natexlab{a}}).
\newblock Mastering chess and shogi by self-play with a general reinforcement
  learning algorithm.
\newblock \textit{arXiv preprint arXiv:1712.01815} .

\bibitem[{Silver et~al.(2017{\natexlab{b}})Silver, Schrittwieser, Simonyan,
  Antonoglou, Huang, Guez, Hubert, Baker, Lai, Bolton
  et~al.}]{silver2017mastering}
\textsc{Silver, D.}, \textsc{Schrittwieser, J.}, \textsc{Simonyan, K.},
  \textsc{Antonoglou, I.}, \textsc{Huang, A.}, \textsc{Guez, A.},
  \textsc{Hubert, T.}, \textsc{Baker, L.}, \textsc{Lai, M.}, \textsc{Bolton,
  A.} \textsc{et~al.} (2017{\natexlab{b}}).
\newblock Mastering the game of go without human knowledge.
\newblock \textit{nature} \textbf{550} 354--359.

\bibitem[{Sohl-Dickstein et~al.(2015)Sohl-Dickstein, Weiss, Maheswaranathan and
  Ganguli}]{sohl2015deep}
\textsc{Sohl-Dickstein, J.}, \textsc{Weiss, E.}, \textsc{Maheswaranathan, N.}
  and \textsc{Ganguli, S.} (2015).
\newblock Deep unsupervised learning using nonequilibrium thermodynamics.
\newblock In \textit{International conference on machine learning}. PMLR.

\bibitem[{Song et~al.(2020{\natexlab{a}})Song, Meng and
  Ermon}]{song2020denoising}
\textsc{Song, J.}, \textsc{Meng, C.} and \textsc{Ermon, S.}
  (2020{\natexlab{a}}).
\newblock Denoising diffusion implicit models.
\newblock \textit{arXiv preprint arXiv:2010.02502} .

\bibitem[{Song and Ermon(2019)}]{song2019generative}
\textsc{Song, Y.} and \textsc{Ermon, S.} (2019).
\newblock Generative modeling by estimating gradients of the data distribution.
\newblock \textit{Advances in neural information processing systems}
  \textbf{32}.

\bibitem[{Song et~al.(2020{\natexlab{b}})Song, Sohl-Dickstein, Kingma, Kumar,
  Ermon and Poole}]{song2020score}
\textsc{Song, Y.}, \textsc{Sohl-Dickstein, J.}, \textsc{Kingma, D.~P.},
  \textsc{Kumar, A.}, \textsc{Ermon, S.} and \textsc{Poole, B.}
  (2020{\natexlab{b}}).
\newblock Score-based generative modeling through stochastic differential
  equations.
\newblock \textit{arXiv preprint arXiv:2011.13456} .

\bibitem[{Tesauro et~al.(1995)}]{tesauro1995temporal}
\textsc{Tesauro, G.} \textsc{et~al.} (1995).
\newblock Temporal difference learning and td-gammon.
\newblock \textit{Communications of the ACM} \textbf{38} 58--68.

\bibitem[{Wallace et~al.(2023)Wallace, Dang, Rafailov, Zhou, Lou, Purushwalkam,
  Ermon, Xiong, Joty and Naik}]{wallace2023diffusion}
\textsc{Wallace, B.}, \textsc{Dang, M.}, \textsc{Rafailov, R.}, \textsc{Zhou,
  L.}, \textsc{Lou, A.}, \textsc{Purushwalkam, S.}, \textsc{Ermon, S.},
  \textsc{Xiong, C.}, \textsc{Joty, S.} and \textsc{Naik, N.} (2023).
\newblock Diffusion model alignment using direct preference optimization.
\newblock \textit{arXiv preprint arXiv:2311.12908} .

\bibitem[{Watson et~al.(2021)Watson, Ho, Norouzi and Chan}]{watson2021learning}
\textsc{Watson, D.}, \textsc{Ho, J.}, \textsc{Norouzi, M.} and \textsc{Chan,
  W.} (2021).
\newblock Learning to efficiently sample from diffusion probabilistic models.
\newblock \textit{arXiv preprint arXiv:2106.03802} .

\bibitem[{Wu et~al.(2023)Wu, Hao, Sun, Chen, Zhu, Zhao and Li}]{wu2023human}
\textsc{Wu, X.}, \textsc{Hao, Y.}, \textsc{Sun, K.}, \textsc{Chen, Y.},
  \textsc{Zhu, F.}, \textsc{Zhao, R.} and \textsc{Li, H.} (2023).
\newblock Human preference score v2: A solid benchmark for evaluating human
  preferences of text-to-image synthesis.
\newblock \textit{arXiv preprint arXiv:2306.09341} .

\bibitem[{Xu et~al.(2023)Xu, Liu, Wu, Tong, Li, Ding, Tang and
  Dong}]{xu2023imagereward}
\textsc{Xu, J.}, \textsc{Liu, X.}, \textsc{Wu, Y.}, \textsc{Tong, Y.},
  \textsc{Li, Q.}, \textsc{Ding, M.}, \textsc{Tang, J.} and \textsc{Dong, Y.}
  (2023).
\newblock Imagereward: Learning and evaluating human preferences for
  text-to-image generation.
\newblock \textit{arXiv preprint arXiv:2304.05977} .

\bibitem[{Yang et~al.(2023)Yang, Tao, Lyu, Ge, Chen, Li, Shen, Zhu and
  Li}]{yang2023using}
\textsc{Yang, K.}, \textsc{Tao, J.}, \textsc{Lyu, J.}, \textsc{Ge, C.},
  \textsc{Chen, J.}, \textsc{Li, Q.}, \textsc{Shen, W.}, \textsc{Zhu, X.} and
  \textsc{Li, X.} (2023).
\newblock Using human feedback to fine-tune diffusion models without any reward
  model.
\newblock \textit{arXiv preprint arXiv:2311.13231} .

\bibitem[{Yu et~al.(2022)Yu, Xu, Koh, Luong, Baid, Wang, Vasudevan, Ku, Yang,
  Ayan et~al.}]{yu2022scaling}
\textsc{Yu, J.}, \textsc{Xu, Y.}, \textsc{Koh, J.~Y.}, \textsc{Luong, T.},
  \textsc{Baid, G.}, \textsc{Wang, Z.}, \textsc{Vasudevan, V.}, \textsc{Ku,
  A.}, \textsc{Yang, Y.}, \textsc{Ayan, B.~K.} \textsc{et~al.} (2022).
\newblock Scaling autoregressive models for content-rich text-to-image
  generation.
\newblock \textit{arXiv preprint arXiv:2206.10789} \textbf{2} 5.

\bibitem[{Zheng et~al.(2023)Zheng, Yuan, Yu and
  Kong}]{zheng2023reparameterized}
\textsc{Zheng, L.}, \textsc{Yuan, J.}, \textsc{Yu, L.} and \textsc{Kong, L.}
  (2023).
\newblock A reparameterized discrete diffusion model for text generation.
\newblock \textit{arXiv preprint arXiv:2302.05737} .

\end{thebibliography}
\bibliographystyle{ims}

\end{document}